\newtheorem{theorem}{Theorem}
\newtheorem{lemma}{Lemma}
\newtheorem{claim}{Claim}
\newtheorem{corollary}{Corollary}
\newtheorem{definition}{Definition}
\newcommand{\Cov}{\mathsf{Cov}}
\newcommand{\N}{\mathbb{N}}
\newcommand{\C}{\mathcal{C}}
\newcommand{\Non}{\mathsf{NCC}}
\newcommand{\Un}{\mathsf{UCC}}
\newcommand{\Par}{\mathsf{Par}}
\newcommand{\tO}{\tilde{O}}
\newcommand{\tOmega}{\tilde{\Omega}}
\newcommand{\AC}{\textrm{AC}_m}
\newcommand{\ACp}{\textrm{AC}_p}
\newcommand{\sat}{\mathsf{sat}}
\newcommand{\supp}{\mathsf{supp}}
\newcommand{\var}{\mathsf{var}}
\newcommand{\dom}{\mathsf{dom}}
\newcommand{\perm}{\mathsf{perm}}
\newcommand{\rep}{\mathsf{rep}}
\title{Structured d-DNNF Is Not Closed Under Negation}
\author{
   Harry Vinall-Smeeth
    \affiliations
    Technische Universit\"at Ilmenau
    \emails
    harry.vinall-smeeth@tu-ilmenau.de
}
\author{
First Author$^1$
\and
Second Author$^2$\and
Third Author$^{2,3}$\And
Fourth Author$^4$\\
\affiliations
$^1$First Affiliation\\
$^2$Second Affiliation\\
$^3$Third Affiliation\\
$^4$Fourth Affiliation\\
\emails
\{first, second\}@example.com,
third@other.example.com,
fourth@example.com
}
\begin{document}

\maketitle

\begin{abstract}
Both structured d-DNNF, introduced in \cite{DBLP:conf/aaai/PipatsrisawatD08}, and SDD, introduced in \cite{DBLP:conf/ijcai/Darwiche11}, can be exponentially more succinct than OBDD. Moreover, SDD is essentially as tractable as OBDD. But this has left two important open  questions. Firstly, does OBDD support more tractable transformations than structured d-DNNF \cite{DBLP:conf/aaai/PipatsrisawatD08}? And secondly, is structured d-DNNF more succinct than SDD \cite{DBLP:conf/uai/BeameL15}? In this paper, we answer both questions in the affirmative. For the first question we show that, unlike OBDD, structured d-DNNF does not support polytime negation, disjunction, or existential quantification operations. As a corollary, we deduce that there are functions with an equivalent polynomial-sized structured d-DNNF but with no such representation as an SDD, thus answering the second question. We also lift this second result to \emph{arithmetic circuits} (AC) to show a succinctness gap between PSDD and the monotone AC analogue to structured d-DNNF.
\end{abstract}

\section{Introduction}

Knowledge compilation aims to provide useful representations of Boolean functions (propositional knowledge bases). What `useful' means is context dependent and has, broadly speaking, three aspects. The first is succinctness: how big is our representation? The second is transformations. For instance, given a representation for $f$ and a representation for $g$ can we form a representation for $f \wedge g$ in polynomial time? The third is queries: given our representations what can we (efficiently) determine about our function? For example, given a representation for $f$ can we determine $|f^{-1}(1)|$ in polynomial time? These aspects may be in tension with one another; to get a representation which supports more queries or transformations we may have to accept increased size. A key task in knowledge compilation is to map out the trade-offs of using different representations. 

In the landmark paper \cite{DBLP:journals/jair/DarwicheM02}, it is shown that many well-studied representation formats are subsets of Boolean circuits in Negation Normal Form (NNF). Consequently, over the past two decades, research on representations within the AI community has focused on  classes which arise from imposing syntactic restrictions on NNF. Two  influential restrictions are \emph{decomposability} and \emph{determinism}, an NNF that satisfies both properties is called a d-DNNF. Such circuits support a large range of polynomial time queries such as clausal entailment and model enumeration. 

Another, older, representation format is the Ordered Binary Decision Diagram (OBDD) first introduced in \cite{DBLP:journals/tc/Bryant86}. In fact, OBDD is a subset of d-DNNF \cite{DBLP:journals/jair/DarwicheM02}. While a Boolean function may have an equivalent d-DNNF that is exponentially smaller than any equivalent OBDD,  in many practical settings the latter is preferred. There are two crucial reasons for this. Firstly, OBDDs that use a common variable order are closed under Boolean operations; this is useful for instance in \emph{bottom-up} approaches to the compilation of Boolean formulas, see e.g.\! \cite{somenzi2009cudd}. Secondly, OBDDs are \emph{canonical} which greatly simplifies the task of finding an optimal compilation; one just needs to find an optimal variable order. 

A natural question is whether there are compilation languages lying between OBDD and d-DNNF that are more succinct than OBDD but which have nicer properties than d-DNNF? This paper will be concerned with two such languages: \emph{structured} d-DNNF \cite{DBLP:conf/aaai/PipatsrisawatD08} and Sequential Decision Diagram (SDD) \cite{DBLP:conf/ijcai/Darwiche11}. SDD has become a popular representation format since they, like OBDD, are canonical and closed under Boolean operations \cite{DBLP:conf/ijcai/Darwiche11,DBLP:conf/aaai/BroeckD15}. Moreover, they may be exponentially more succinct than OBDD \cite{DBLP:conf/aaai/Bova16}. Structured d-DNNF, on the other hand, contains SDD as a subset and supports a polynomial time conjoin operation. They may, however, be exponentially more verbose than d-DNNF. 

One may then wonder: is there any advantage to using structured d-DNNF over SDD? To be precise are their functions which have polynomially sized representations as structured d-DNNFs but do not have such SDD representations? This is a question which has been raised since at least 2015 \cite{DBLP:conf/uai/BeameL15} and has received substantial interest, see \cite{DBLP:conf/pods/BovaS17,DBLP:journals/mst/BolligF21}, but has remained open until now. In this paper, we answer this question in the affirmative.

\begin{theorem} \label{thm: sep}
For every $n \in \mathbb{N}$, there exists a function $f$ with an equivalent structured d-DNNF of size $n$ such that any SDD equivalent to $f$ has size $n^{\tOmega(\log(n))}$.
\end{theorem}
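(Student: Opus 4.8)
The plan is \emph{not} to attack the SDD lower bound directly, but to route it through negation. I would rely on two structural facts recalled in the introduction: (i) every SDD respecting a vtree $T$ turns, with only linear blow-up, into a structured d-DNNF respecting the \emph{same} $T$; and (ii) SDD is closed under negation with at most linear blow-up, again keeping the vtree fixed --- indeed if $f=\bigvee_i(p_i\wedge s_i)$ with the $p_i$ forming a partition, then $\neg f=\bigvee_i(p_i\wedge\neg s_i)$. Given these, it suffices to exhibit, for each $N$, a function $f_N$ (on $m=\mathrm{poly}(N)$ variables) with a structured d-DNNF of size $\mathrm{poly}(N)$, such that \emph{every} structured d-DNNF for $\neg f_N$, with respect to \emph{any} vtree, has size $N^{\tOmega(\log N)}$. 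A size-$M$ SDD for $f_N$ would then yield a size-$O(M)$ SDD for $\neg f_N$, hence a size-$O(M)$ structured d-DNNF for $\neg f_N$, forcing $M=N^{\tOmega(\log N)}$; rescaling $N$ against $n=\Theta(\text{size of the structured d-DNNF for }f_N)$ gives Theorem~\ref{thm: sep} verbatim. So the whole problem reduces to showing that \emph{structured d-DNNF does not support polynomial-time negation}.

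For the construction I would look for an $f_N$ with two opposing features. On one hand $f_N$ should be a conjunction of many ``local'' constraints that are simultaneously compatible with one fixed vtree $T_0$, so that --- using that structured d-DNNF \emph{is} closed under polynomial-time conjunction --- conjoining the (constant-size) structured d-DNNFs of the individual constraints yields a structured d-DNNF of size $\mathrm{poly}(N)$ (alternatively one builds this circuit, or an OBDD for $f_N$, by hand). On the other hand $\neg f_N$, the corresponding \emph{disjunction} of negated constraints, must be ``generically'' hard: for \emph{every} balanced bipartition $(A,B)$ of the variable set, $(\neg f_N)^{-1}(1)$ should admit no small \emph{disjoint} cover by combinatorial rectangles over $(A,B)$. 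A natural place to find such an $f_N$ is among graph-property functions --- the constants in the preamble ($\Un,\Non,\Cov,\Par$) hint at something like ``the present edges form a disjoint union of cliques'', ``\dots a valid cover/partition'', or a connectivity-flavoured property --- because any balanced split of the edge variables is forced to cut the combinatorial object badly, and a suitable recursive/self-similar family of this kind is what will ultimately produce the $\log N$ in the exponent.

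The technical heart, and the step I expect to be the main obstacle, is the lower bound for $\neg f_N$. The usable consequence of determinism is: a structured d-DNNF of size $s$ for $h$ respecting a vtree $T$ gives, at \emph{every} node $v$ of $T$ with variable bipartition $(\var(v),\overline{\var(v)})$, a decomposition $h=\bigvee_{i\le s}(g_i\wedge h_i)$ with $g_i$ over $\var(v)$, $h_i$ over the complement, \emph{and} the terms $g_i\wedge h_i$ pairwise inconsistent; equivalently, $h^{-1}(1)$ is a disjoint union of at most $s$ rectangles over that bipartition. Since every vtree on the variables of $f_N$ has a node inducing a $\tfrac13$-balanced bipartition, it is enough to prove: for every $\tfrac13$-balanced bipartition $(A,B)$, the communication matrix of $\neg f_N$ over $(A,B)$ has biclique-partition number $N^{\tOmega(\log N)}$. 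The subtlety is that this bounds \emph{disjoint} covers, so fooling-set arguments (which only bound general covers, and in these constructions the ordinary cover number typically stays small --- which is precisely why it is negation, and not, say, DNNF-size, that fails) do not suffice; I would instead use a Graham--Pollak / real-rank style argument, weighting the matrix entries over $\mathbb{R}$ so that any rectangle partition is forced to have many parts, and I would have to make this argument work \emph{uniformly over all} balanced bipartitions --- which is exactly where the precise combinatorics of $f_N$ must be exploited.

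Finally, with the lower bound in hand, I would close the loop: verify routinely that $f_N$ has the claimed $\mathrm{poly}(N)$-size structured d-DNNF by exhibiting $T_0$ and the circuit, and then read off Theorem~\ref{thm: sep} from the reduction of the first paragraph. (The same $f_N$, or a minor variant, should additionally refute polynomial-time disjunction and existential quantification for structured d-DNNF, via De~Morgan together with closure under conjunction --- e.g.\ by exhibiting the hard function $\neg f_N$ as a disjunction, or a projection, of functions that are themselves easy --- but that is not needed for Theorem~\ref{thm: sep}.)
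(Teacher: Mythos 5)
Your first paragraph is exactly the paper's proof of Theorem~\ref{thm: sep}: since SDD is a subset of d-SDNNF and is closed under negation with polynomial blow-up, a small SDD for $f$ would yield a small structured (d-)DNNF for $\neg f$, so the whole statement reduces to the failure of polynomial-size negation for structured d-DNNF (Theorem~\ref{thm: main}). That reduction is correct and is all the paper does at this point.

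The genuine gap is in how you propose to establish the negation lower bound itself, which is where all the work lies. First, you aim to lower-bound the \emph{disjoint} rectangle partition number of $(\neg f_N)^{-1}(1)$ via a Graham--Pollak/rank argument while conceding that the ordinary cover number "typically stays small." The paper does the opposite and needs to: it lower-bounds the plain \emph{cover} number $\Cov_1(\neg f)=\Cov_0(f)$, i.e., it exhibits a function with a small \emph{unambiguous} $k$-DNF (hence small $1$-side partition number, hence a small d-SDNNF) whose $0$-side nondeterministic cover number is $2^{\tOmega(k^2)}$. This near-quadratic gap between unambiguous certificate/partition complexity and co-nondeterministic cover complexity is precisely the hard UP-vs-coNP--type separation in communication complexity; it is imported from G\"o\"os et al.\ (built on query-to-communication lifting), not something one gets from a hand-crafted graph property plus a rank bound. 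Relatedly, the quasi-polynomial exponent $\log n$ does not come from a recursive/self-similar family: it is just the arithmetic $2^{\tOmega(k^2)} = \bigl(2^{\tO(k)}\bigr)^{\tOmega(\log 2^{\tO(k)})}$ relating the number of DNF terms to the cover bound.

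Second, you correctly identify that the bound must hold for \emph{every} balanced bipartition, but you offer no mechanism for this, and "the precise combinatorics of $f_N$" will not deliver it for the functions that actually have the required fixed-partition hardness. The paper's solution is a separate construction: duplicate each variable into $m=O(n)$ copies (re-symmetrizing to keep the DNF unambiguous), then adjoin auxiliary variables encoding a permutation drawn from the pairwise-independent family of affine maps $x\mapsto ax+b$ over a finite field; a Chebyshev argument shows some permutation in the family routes, for each original variable, at least one copy to each side of any given balanced partition, which reduces best-partition hardness of the new function to fixed-partition hardness of the old one. Without this (or an equivalent) lifting step, and without the communication-complexity input above, the proposal does not yield Theorem~\ref{thm: main}, and hence does not close Theorem~\ref{thm: sep}.
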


We prove this by showing that structured d-DNNF is \emph{not} closed under negation which has been an open question in its own right since \cite{DBLP:conf/aaai/PipatsrisawatD08}.

\begin{theorem} \label{thm: main}
For every $n \in \N$, there exists a Boolean function $f$ with an equivalent structured d-DNNF of size $n$ and such that any structured DNNF equivalent to $\neg f$ has size $n^{\tOmega(\log n)}$.
\end{theorem}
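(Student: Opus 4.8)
The plan is to exhibit an explicit family of Boolean functions $f_n$ that has small structured d-DNNF representations but whose negations require large structured DNNF. I would start from a function for which it is known (or easy to establish) that $f_n$ has polynomial-size structured d-DNNF while some lower-bound technique forces any structured DNNF for $\neg f_n$ to be quasipolynomially large. The natural source of such lower bounds is communication complexity: by the standard connection between structured DNNF and \emph{best-case} communication protocols respecting a fixed vtree, a structured DNNF of size $s$ for a function $g$ on variable partition $(X,Y)$ induced by the vtree yields a (nondeterministic, multi-partition along the vtree) communication protocol / a small cover of the $1$-inputs of $g$ by combinatorial rectangles compatible with the vtree. So the goal reduces to: (i) find $f_n$ with a polynomial structured d-DNNF — meaning both $f_n$ and, by determinism, a \emph{partition} of its models into rectangles along \emph{some} vtree — and (ii) prove that for \emph{every} vtree, $\neg f_n$ has no subexponential (here quasipolynomial) cover of its $1$-inputs by compatible rectangles.

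The key steps, in order, would be: first, define $f_n$; a promising candidate is built from a disjointness-like or inner-product-like gadget arranged on a balanced tree structure, or the function underlying known OBDD/SDD lower bounds (e.g.\ a variant of the "hidden weighted bit" or a graph-based function) whose complement is "hard for all orders". Second, construct the polynomial-size structured d-DNNF for $f_n$ explicitly, choosing a vtree adapted to the gadget layout and verifying decomposability and determinism by hand — the determinism is the delicate part and usually comes from the combinatorial structure forcing the sub-DNNFs to have disjoint model sets. Third, set up the lower-bound framework for $\neg f_n$: fix an arbitrary vtree $v$, extract from it a balanced partition $(X,Y)$ of the variables (every vtree has a node inducing a near-balanced split), and argue that a structured DNNF of size $s$ respecting $v$ gives a cover of $(\neg f_n)^{-1}(1)$ by at most $s$ rectangles that are "balanced" with respect to $(X,Y)$. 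Fourth, prove a rectangle-cover (nondeterministic communication) lower bound: show that under \emph{any} balanced partition, $(\neg f_n)^{-1}(1)$ needs $n^{\tilde\Omega(\log n)}$ rectangles, typically via a fooling-set or a rank/discrepancy argument, exploiting that the hard instances can be embedded no matter how the vtree splits the variables. Finally, combine: since the vtree was arbitrary, every structured DNNF for $\neg f_n$ has size $n^{\tilde\Omega(\log n)}$, while $f_n$ has one of size $n$ (after re-parametrising $n$).

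I expect the main obstacle to be step (ii)–(iv): proving the lower bound for $\neg f_n$ \emph{simultaneously against all vtrees}. A single fixed partition is easy, but a structured DNNF gets to choose the vtree, so one needs a function whose negation is hard under \emph{every} balanced split — this is exactly the phenomenon that makes structured d-DNNF not closed under negation, and pinning down such an $f_n$ while \emph{also} keeping $f_n$ itself easy (polynomial structured d-DNNF, which is a strong constraint because it demands a good vtree \emph{and} a deterministic decomposition) is the crux. A secondary difficulty is the quantitative bound: getting $n^{\tilde\Omega(\log n)}$ rather than merely superpolynomial likely requires a recursive/self-similar construction so that the communication hardness compounds logarithmically with the depth of the tree, and care is needed to ensure the recursion does not simultaneously blow up the size of the structured d-DNNF for $f_n$. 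I would therefore prioritise getting the construction of $f_n$ right so that both the easy direction and the all-vtrees hard direction fall out of the same recursive gadget.
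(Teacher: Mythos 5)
You have correctly identified the right framework (size of a structured DNNF upper-bounds the number of v-tree-compatible rectangles needed to cover the $1$-inputs, so the task reduces to a best-partition nondeterministic cover lower bound for $\neg f$ against \emph{every} balanced split), and you have correctly located the crux: making the lower bound hold simultaneously for all v-trees while keeping $f$ itself easy. But the proposal stops at exactly the two points where the actual content lies, and the concrete suggestions you offer for filling them would not work. First, the source of hardness cannot be an inner-product, disjointness, or hidden-weighted-bit style gadget, nor can it be proved by fooling sets or discrepancy: what is needed is a function $g$ admitting an \emph{unambiguous} $k$-DNF with $2^{\tO(k)}$ terms (this is what gives the small d-SDNNF, since determinism comes for free from unambiguity) whose \emph{zero}-set nonetheless has nondeterministic cover number $2^{\tOmega(k^2)}$. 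This asymmetry between the $1$-side and the $0$-side is essential --- discrepancy-type arguments lower-bound both sides and would contradict the required upper bound on $f$ --- and the quasipolynomial exponent $\tOmega(\log n)$ comes from the $k$ versus $k^2$ gap in this separation (a deep result of G\"o\"os et al.\ in the fixed-partition model), not from a recursive gadget whose hardness ``compounds with the depth of the tree.'' Your recursive-composition plan has no mechanism for producing this particular gap.

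Second, even granting such a $g$, the known lower bound is only for one \emph{fixed} partition $\Pi$, and you give no method for converting it into a bound against all balanced partitions. The paper does this by a dedicated lifting construction (adapted from Knop/Segerlind): each variable is replaced by a disjunction of $\Theta(n)$ fresh copies (with extra negative literals added to restore unambiguity), and then a family of $O(n^4)$ pairwise-independent permutations (affine maps over a finite field, encoded into auxiliary selector variables) is applied to the copies; a probabilistic argument (Chebyshev plus pairwise independence) shows that for \emph{any} balanced partition of the new variable set some permutation in the family realigns it with $\Pi$, so a protocol for $\psi'$ under the adversarial partition simulates one for $\psi$ under $\Pi$. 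Crucially this lift must preserve unambiguity and keep the number of terms at $2^{\tO(k)}$ so that the upper bound on $f$ survives. Without both of these ingredients --- the unambiguous-DNF versus co-nondeterministic separation, and the fixed-to-best-partition lift --- the proposal is a restatement of the problem rather than a proof.
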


Thus, we simultaneously show that there is an advantage to using SDD over structured d-DNNF. We similarly show that structured d-DNNF is not closed under disjunction or existential quantification thus completing the `knowledge compilation map' for structured d-DNNF. 

\emph{Arithmetic circuits (AC)} also play an important role in AI, particularly in \emph{probabilistic reasoning}. Here one prominent circuit type is PSDD \cite{DBLP:conf/kr/KisaBCD14}. As the name suggests, these are the AC analogue of SDD. PSDDs have several nice properties making them ripe for applications. For example, they support a polynomial time multiplication operation (analogous to the polynomial time conjoin operation for SDDs), which is useful for instance in compiling probabilistic graphical models \cite{DBLP:conf/nips/ShenCD16}. de Colnet and Mengel observed that in many cases separations between representations of Boolean functions can be extended to monotone AC in a straightforward manner \cite[Proposition 2]{DBLP:conf/kr/ColnetM21}.\footnote{We should note here that although the paper states the proposition as an if and only if, in fact only one of the directions holds. Luckily, this is the direction used in the rest of the paper and which we need to translate our results to ACs.} We exploit this to show a succinctness gap between PSDD and the monotone AC analogue to structured d-DNNF

Our proof of Theorem~\ref{thm: main} exploits a connection between knowledge compilation and \emph{communication complexity} which has been widely deployed in recent years,  see e.g. \cite{DBLP:conf/uai/BeameL15,bova2016knowledge,DBLP:journals/mst/AmarilliCMS20}. We start from the same piece of communication complexity as \cite{goos2021lower}, where  an analogous result for unambiguous finite automata (UFA) is obtained. However, while the size of UFAs is related to the \emph{fixed partition} communication complexity model the size of structured d-DNNF is related to another model: the \emph{best partition} communication complexity. We therefore adapt an ingenious construction from \cite{knop2017ips}, which allows one to lift results from the fixed partition model to the best partition model.\footnote{We should note that there is an older construction from \cite{DBLP:journals/jcss/LamR92} which also allows one to lift communication complexity results to the best partition model. However, this construction requires that the functions involved are \emph{paddable}; as far as we can tell, this is not the case for the functions we use.}

The rest of the paper is structured as follows. In Section~\ref{sec: prelim} we define our main objects of study. In Section~\ref{sec: CC} we introduce the communication complexity we need. Following this, in Section~\ref{sec: main} we prove our main theorem. Finally, in Section~\ref{sec: AC} we show how our results extend to ACs. 

\section{Formulas, NNF, structured d-DNNF and SDD} \label{sec: prelim}

\subparagraph{Formulas and Boolean functions}
Recall that a propositional formula is a DNF if it is a disjunction of conjunctions. We call each disjunct a \emph{term}. A DNF $\psi$ is a $k$-DNF if each term contains at most $k$-literals and \emph{unambiguous} if every assignment $\alpha : \var(\psi) \to \{0,1\}$ satisfies at most one term of $\psi$. Let $\sat(\psi)$ denote the set of satisfying assignments for a propositional formula $\psi$. We identify each propositional formula $\psi$, with a Boolean function with domain $\{0,1\}^{\var(\psi)}$ in the standard way, i.e., the function evaluates to 1 on input $\underline{x}$ iff $\underline{x} \in \sat(\psi)$. For $f$ a Boolean function, we write $\sat(f) := f^{-1}(1)$. We will be interested in the following transformations.

\begin{definition} \label{def: trans}
Let $f, g : \{0,1\}^X \to \{0,1\}$ be Boolean functions and $x\in X$. Then we write:
\begin{enumerate}
\item (negation) $\neg f$ to denote the Boolean function with $\sat(\neg f) = f^{-1}(0)$;
\item (existential quantification) $\exists x f$ to denote the Boolean function with $\sat(\exists x f) = \pi_Y(\sat(f))$, where $\pi$ denotes projection and $Y:=\{0,1\}^{X \setminus \{x\}}$;
\item (disjunction) $f \vee g$ to denote the Boolean function with $\sat(f \vee g) = \sat(f) \cup \sat(g)$ and
\item (conjunction) $f \wedge g$ to denote the Boolean function with $\sat(f \wedge g) = \sat(f) \cap \sat(g)$.
\end{enumerate}
\end{definition} 

\subparagraph{Negation Normal Form}
\begin{definition} \label{def: NNF}
A Boolean circuit in \emph{Negation Normal Form (NNF)} is a vertex-labelled directed acyclic graph with a unique source such that every internal node is a fan-in two $\wedge$- or $\vee$-node and whose leaves are each labelled by 0, 1, a variable $x$ or a negated variable $\neg x$.
\end{definition}
We define the size of $\C$, an NNF to be the number of vertices in the underlying graph and denote this by $|\C|$. Note that by expanding out an NNF circuit $\C$ we get a unique propositional formula which we denote by $\langle \C \rangle$. Further, we write $\var(\C)$ for the set of variables occurring in $\C$. It will be convenient to associate a set of variables $\dom(\C)$ to $\C$ which contains $\var(\C)$ (together with possibly other variables). Unless otherwise stated, we assume that $\dom(\C) = \var(C)$. We write $f_C: \{0,1\}^{\dom(\C)} \to \{0,1\}$ to denote the Boolean function computed by $\C$ in the obvious way. We say that $\C$ is \emph{equivalent} to $f_{\C}$ and define $\sat(\C) := \sat(f_{\C})$. If for some $\mathsf{C} \subseteq$ NNF, a Boolean function $f$ is equivalent to some $\C \in \mathsf{C} $ of size $s$ then we say that $f$ \emph{admits} a $\mathsf{C}$ of size $s$.

\subparagraph{Decomposability, determinism and structuredness} For a node $g$ of $\C$, we write $\C(g)$ for the subcircuit rooted at $g$. If $g$ is not a leaf we write $g_{\ell}$ (resp. $g_r$) for its left (resp. right) child. An NNF, $\C$, is \emph{decomposable} if for every $\wedge$-node $g \in \C$, $\var(g_{\ell}) \cap \var(g_r) = \emptyset$ \cite{DBLP:journals/jacm/Darwiche01}. $\C$ is \emph{deterministic} if for every $\vee$-node $g \in \C$, $\sat(\C(g_{\ell})) \cap \sat(\C(g_{r})) = \emptyset$, where we set $\dom(\C(g_{\ell})) = \dom(\C(g_{r}))=\dom(\C)$ \cite{DBLP:journals/jancl/Darwiche01}. The set of decomposable NNF is denoted by DNNF and the set of deterministic DNNF by d-DNNF. 

We now only need one ingredient to get to structured d-DNNF; for this, we need the notion of a \emph{v-tree}.

\begin{definition} \label{def: vtree}
A \emph{v-tree} over variables $X$ is a full, rooted, binary tree whose leaves are in 1-1 correspondence with the elements of $X$. 
\end{definition}

For a non-leaf node $t$ of a v-tree $T$, we write $t_{\ell}$ for its left child and $t_{r}$ for its right child. A DNNF $\C$ \emph{respects} a v-tree $T$, if for every $\wedge$-node $g \in \C$, there is a node $t$ of $T$ such that $\var(g_{\ell}) \subseteq \var(t_{\ell})$ and $\var(g_{r}) \subseteq \var(t_{r})$; see Figure~\ref{fig: tree}.

\begin{definition} \label{def: structure}
A (d)-DNNF $\C$ is \emph{structured} if it respects some v-tree. We denote the set of structured (d)-DNNF by (d)-SDNNF.
\end{definition}

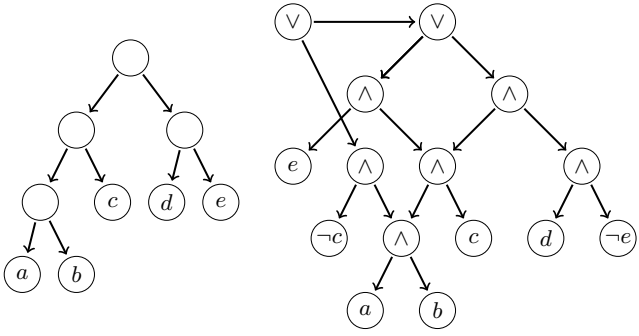
\begin{figure} 
\centering
\scalebox{0.96}{
\begin{tikzpicture}
  [wire/.style={thick, ->, shorten <= 0.3mm, shorten >= 0.5mm},
  gwire/.style={thick, ->, shorten <= 0.3mm, shorten >= 0.5mm, color=green},
   edge/.style={ -stealth, shorten <= 0.3mm, shorten >= 0.5mm},
   inputgate/.style={inner sep=1pt,minimum size=4mm},
   gate/.style={draw,circle,inner sep=1pt,minimum size=5mm},
   vertex/.style={draw,circle,fill=white,inner sep=2pt,minimum
   size=20mm}] 
   
    \begin{scope}[xshift=0.25cm,yshift=-1.5cm]
      \node[gate] (a) at (-1.5,0) {\small $a$};
      \node[gate] (b) at (-1.5+0.75,0) {\small$b$};
      \node[gate] (g1) at (-1.25,1) {};
      \node[gate] (g2) at (-1.25+1,1) {\small $c$};
      \node[gate] (g4) at (-0.75,2) {};
      \node[gate] (g5) at (0.75,2) {};
      \node[gate] (g6) at (-0,3) {};
      \node[gate] (d) at (0.5,1) {\small $d$};
      \node[gate] (e) at (1.25,1) {\small$e$};
      
      \draw[wire] (g1) -- (a);
      \draw[wire] (g1) -- (b);
      \draw[wire] (g4) -- (g1);
      \draw[wire] (g4) -- (g2);
      \draw[wire] (g6) -- (g4);
      \draw[wire] (g6) -- (g5);
      \draw[wire] (g5) -- (d);
      \draw[wire] (g5) -- (e);

    \end{scope}

    \begin{scope}[xshift=5.5cm,yshift=-2cm]
       \node[gate] (a) at (-2,0) {\small $a$};
      \node[gate] (b) at (-2+1,0) {\small$b$};
      \node[gate] (nc) at (-2.5,1) {\small$\neg c$};
      \node[gate] (w1) at (-1.5,1) {\small $\wedge$};
      \node[gate] (w2) at (-2,2){\small $\wedge$};
      \node[gate] (c) at (-1.5+1,1) {\small $c$};
      \node[gate] (w3) at (-1,2) {\small$\wedge$};
      \node[gate] (w4) at (-2,3) {\small$\wedge$};
      \node[gate] (e) at (-3,2) {\small $e$};
      \node[gate] (g5) at (1,2) {\small$\wedge$};
      \node[gate] (g6) at (-0,3) {\small$\wedge$};
      \node[gate] (d) at (0.5,1) {\small $d$};
      \node[gate] (ne) at (1.5,1) {\small$\neg e$};
      \node[gate] (v) at (-1,4) {\small$\vee$};
      %\node[gate] (v1) at (-3,3){\small$\vee$};
      \node[gate] (v2) at (-3,4){\small$\vee$};

      \draw[wire] (w1) -- (a);
      \draw[wire] (w1) -- (b);
      \draw[wire] (w2) -- (nc);
      \draw[wire] (w2) -- (w1);
      \draw[wire] (w3) -- (w1);
      \draw[wire] (w3) -- (c);
      \draw[wire] (w4) -- (w3);
      \draw[wire] (w4) -- (e);
      \draw[wire] (g6) -- (w3);
      \draw[wire] (g6) -- (g5);
      \draw[wire] (g5) -- (d);
      \draw[wire] (g5) -- (ne);
      \draw[wire] (v) -- (g6);
      \draw[wire] (v) -- (w4);
      \draw[wire] (v) -- (w4);
       \draw[wire] (v2) -- (v);
       \draw[wire] (v2) -- (w2);
  
\end{scope}      
\end{tikzpicture}
}
\caption{(left) A v-tree $T$. (right) A structured d-DNNF $\C$ that respects $T$. $\langle \C \rangle =  (a \wedge b \wedge \neg c ) \vee (a \wedge b \wedge c \wedge e) \vee (a \wedge b \wedge c \wedge d \wedge \neg e))$}.
\label{fig: tree}
\end{figure}

\subparagraph{SDDs} SDDs are a subset of d-SDNNF which arise from imposing a stricter form of determinism and structurdness called \emph{strong determinism}. The idea is that SDDs respect a certain type of decomposition which generalises the well-known Shannon Decomposition on which OBDDs are based. 

\begin{definition} \label{def: decomp}
Let $f: \{0,1\}^Z \to \{0,1\}$ be a Boolean function and $X,Y \subseteq Z$ be disjoint sets of variables. Then if 
\[
f = \bigvee_{i=1}^n p_i(X) \wedge s_i(Y)
\]  
 then $\{(p_1, s_1), \dots, (p_n, s_n)\}$ is an $X$-decomposition for $f$ if $\vee_{i=1}^n p_i \equiv 1$, $p_i \wedge p_j \equiv 0$ for all $i\neq j$ and $p_i \not \equiv 0$ for all $i$.
\end{definition}

We can now define SDDs. 

\begin{definition} \label{def: SDD}
 Let $T$ be a v-tree over variables $Z$ with root $t$. An SDD respecting $T$ is a DNNF $\C$ with one of the following forms:
 \begin{itemize}
 \item $\C$ consists of a single node labelled by 0, 1, $x$ or $\neg x$, where $x \in Z$.
 \item The source of $\C$ is a $\vee$ node $g$ such that:
 \begin{enumerate}
 \item $\langle \C \rangle = \bigvee_{i=1}^n p_i(X) \wedge s_i(Y)$ where $\{(p_1, s_1), \dots, (p_n, s_n)\}$ is an $X$ decomposition for $f_{\C}$,
 \item $X\subseteq \var(g_{\ell})$, $Y \subseteq \var(g_r)$ and
 \item if $h \in \C$ with $\langle \C(h) \rangle = p_i(X)$ (resp. $s_i(Y)$) for some $i$ then $\C(h)$ is an SDD that respects $t_{\ell}$ (resp. $t_r$).
\end{enumerate}  
 \end{itemize}
 An SDD is an SDD that respects some v-tree.
\end{definition}  

It follows from the definition that SDDs are deterministic and structured. One can further show that SDDs admit conjunction, disjunction and complementation in polynomial time \cite{DBLP:conf/ijcai/Darwiche11}. These are the main facts we need; we include the full definition for context and because it is needed for the connection to arithmetic circuits,\footnote{The definition is somewhat cumbersome as we restrict ourselves to fan-in 2 nodes. We do this to make the overall presentation cleaner. Note, that circuits with unbounded fan-in conjunction and disjunction can be rewritten as fan-in 2 circuits with only a quadratic size blow-up.} see \cite{DBLP:conf/ijcai/Darwiche11,DBLP:journals/mst/BolligF21} for a more thorough introduction to SDDs.

\subparagraph{Succinctness} 
Since we want to compare the succinctness of different representations we need the following notion.

\begin{definition} \cite{DBLP:conf/ijcai/GogicKPS95}
Let $\mathsf{C}_1$ and $\mathsf{C}_2$ be subsets of NNF. We say that $
\mathsf{C}_1$ is \emph{at least as succinct} as $\mathsf{C}_2$ if there is 
a polynomial $p$, such that for every $\C \in \mathsf{C}_2$ there is an 
equivalent $\C' \in \mathsf{C}_1$ with $|\C'| \le p(|\C|)$. We write $
\mathsf{C}_1 \le \mathsf{C}_2$. We say that $\mathsf{C}_1$ is more succinct 
than $\mathsf{C}_2$, denoted $\mathsf{C}_1 < \mathsf{C}_2$, when $
\mathsf{C}_1 \le \mathsf{C}_2$ and $\mathsf{C}_2 \not \le \mathsf{C}_1$.
\end{definition} 

\section{Knowledge Compilation and Communication Complexity} \label{sec: CC}

Our proof will use machinery from \emph{communication complexity}, see \cite{KushilevitzNisan} for an introduction. Communication complexity is concerned with variants of the following scenario. We have two players, Alice and Bob, who would like to determine the value of a two-party function $f : \{0,1\}^n \times \{0,1\}^m$ on an input $(\underline{x},\underline{y})$. The twist is that Alice only has access to $\underline{x}$ and Bob only has access to $\underline{y}$. They aim to compute $f(\underline{x}, \underline{y})$ while communicating as few bits as possible. We now formally introduce all the notions from communication complexity we will need in Section~\ref{sec: main}.

Consider a function $f \colon Z \to \{0,1\}$ and a partition $\Pi = (X,Y)$ of $Z$. We will only consider \emph{balanced} partitions. For us this means that $|Z|/3 \le \min\{|X|, |Y|\}$. Then
a set $A \times B \subseteq X \times Y$ (with $A \subseteq X$ and $B \subseteq Y$) is called a \emph{rectangle} with respect to $\Pi$; we will also call these $\Pi$-rectangles.
We say that $\Pi$-rectangles $R_1, \ldots, R_k$ \emph{cover} a set $S \subseteq Z$ if $\bigcup_i R_i = S$.
We write $\Cov^{\Pi}_b(f)$ to denote the minimum size of a set of $\Pi$-rectangles that cover $f^{-1}(b)$. It turns out that this number is closely related to non-deterministic protocols: we define $\Non_b^{\Pi}(f) := \log_2 \Cov_b^{\Pi}(f)$. This is equal to the minimum number of bits needed by a two-party non-deterministic protocol for establishing that $f: X \times Y \to \{0,1\}$ evaluate to $b$, when one party is given access to the bits from $X$ and the other the bits from $Y$,						 see~\cite[Chapter~2.1]{KushilevitzNisan}. This is a communication complexity measure in the \emph{fixed partition} model.
  
In order to get a connection to d-SDNNF we need to instead look at the $\emph{best partition}$ model. We define the \emph{best-partition} non-deterministic communication complexity of $f$ as $\Non_1(f) := \min_{\Pi} \Non_1^{\Pi}(f)$, where the minimum is taken over all \emph{balanced} partitions. Similarly, we define $\Cov_b(f) : = \min_{\Pi} \Cov_b(f)$. We will also be interested in cases where the rectangles in a cover do not overlap. We say that $\Pi$-rectangles $R_1, \ldots, R_k$ 
\emph{partition} a set $S \subseteq Z$ if $\bigcup_i R_i = S$ 
and $R_i \cap R_j = \emptyset$ for all $i\neq j$. For~$b \in \{0,1\}$, 
the \emph{partition number} $\Par_b^{\Pi}(f)$ is the minimum number of 
$\Pi$-rectangles that partition $f^{-1}(b)$ and $\Par_b(f):= \min_{\Pi} \Par_b^{\Pi}$. 

We end this section by stating a connection between rectangular partitions of $f^{-1}(1)$ and d-SDNNF \cite{DBLP:conf/aaai/PipatsrisawatD10,bova2016knowledge}.
\begin{lemma} \label{lem: rectangle}
If $f\colon \{0,1\}^{n} \to \{0,1\}$ admits a d-SDNNF of size $s$, then $\Par_1(f) \le s$. Moreover, if $f$ admits an SDNNF of size $s$ then $\Cov_1(f) \le s$. 
\end{lemma}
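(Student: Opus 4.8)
The plan is to combine a balanced-separator argument on the v-tree with the \emph{proof tree} (certificate) view of DNNF, proving the \textsf{SDNNF}-version with a cover and simultaneously upgrading it to a partition under determinism. Let $\C$ be a structured DNNF equivalent to $f \colon \{0,1\}^Z \to \{0,1\}$, respecting a v-tree $T$, with $|\C| = s$; after routine constant propagation we may assume $\C$ has no leaf labelled $0$ or $1$, unless $\C$ is a single constant (trivial). First I would pick a node $t$ of $T$ whose number of descendant leaves lies between $|Z|/3$ and $2|Z|/3$; such a $t$ exists by the standard ``descend to the heavier child'' argument. With $X := \var(t)$ and $Y := Z \setminus X$ we get $\min\{|X|,|Y|\} \ge |Z|/3$, i.e.\ a balanced partition $\Pi = (X,Y)$, and it then suffices to build $\le s$ $\Pi$-rectangles that cover $f^{-1}(1)$, and that partition it when $\C$ is deterministic.

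Recall that a \emph{certificate} of $a \in \sat(\C)$ is the (tree-like) subcircuit containing the source, both children of each of its $\wedge$-nodes, exactly one child of each of its $\vee$-nodes, and whose leaves are literals satisfied by $a$; when $\C$ is deterministic the certificate of each $a$ is unique. The key structural step is: in any certificate $C$ there is at most one $\subseteq$-maximal gate $v$ with $\var(\C(v)) \subseteq X$ — call it the \emph{frontier gate} $\mathsf{fr}(C)$, or $\bot$ if none exists. Indeed, two distinct maximal such gates are incomparable in $C$, so their least common ancestor is an $\wedge$-node $g$ (a $\vee$-node has only one child in $C$), with one gate below $g_\ell$ and the other below $g_r$; since $\C$ respects $T$ at $g$, the set $X=\var(t)$ then meets the leaf sets of both children of the relevant v-tree node, which forces $t$ to lie at or above it in $T$, whence $\var(g)\subseteq X$ — contradicting maximality. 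A consequence is that every $X$-literal leaf of $C$ lies at or below $\mathsf{fr}(C)$ and every other leaf of $C$ is a $Y$-literal.

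For the cover I would associate, to each gate $v$ with $\var(\C(v)) \subseteq X$, the $\Pi$-rectangle $A_v \times B_v$, where $A_v \subseteq \{0,1\}^X$ is the cylinder over $\sat(f_{\C(v)})$ and $B_v \subseteq \{0,1\}^Y$ is the set of $\beta$ for which some \emph{partial certificate above $v$} (a certificate of $\C$ with the subtree at $v$ replaced by an accepting leaf) has all its $Y$-literals satisfied by $\beta$; I also add $R_\bot := \{0,1\}^X \times B_\bot$, with $B_\bot$ the set of $\beta$ admitting a certificate of $\C$ with no $\subseteq X$-gate. Splicing a certificate of $\C(v)$ below $v$ into a partial certificate above $v$ shows $A_v \times B_v \subseteq f^{-1}(1)$, and likewise $R_\bot \subseteq f^{-1}(1)$; conversely, any $a \in f^{-1}(1)$ with certificate $C$ lies in $A_{\mathsf{fr}(C)} \times B_{\mathsf{fr}(C)}$, or in $R_\bot$ if $\mathsf{fr}(C) = \bot$. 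So these rectangles cover $f^{-1}(1)$; they are indexed by the $\subseteq X$-gates of $\C$ and the symbol $\bot$, but if $R_\bot \neq \emptyset$ then some certificate uses only $Y$-literal leaves, so $\C$ contains a gate that is not a $\subseteq X$-gate, leaving $\le s-1$ of those — at most $s$ rectangles in all, giving $\Cov_1(f) \le s$.

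When $\C$ is deterministic I would upgrade this to a partition. Then $a \mapsto \mathsf{fr}(C_a)$, with $C_a$ the unique certificate of $a$, is well defined on $f^{-1}(1)$, so its fibres partition $f^{-1}(1)$, and there are $\le s$ of them by the same count. Each fibre is a $\Pi$-rectangle: given $a, a'$ in the fibre of $v$, splice the subtree-at-$v$ of $C_a$ into the above-$v$ part of $C_{a'}$ to obtain a valid certificate of $(a|_X, a'|_Y)$; by uniqueness this is $C_{(a|_X,a'|_Y)}$, and by construction its frontier gate is again $v$, so $(a|_X,a'|_Y)$ lies in the same fibre — hence the fibre is closed under rectangular recombination and is a $\Pi$-rectangle (the fibre of $\bot$ is a cylinder over $X$). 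This gives $\Par_1(f) \le s$. I expect the main obstacles to be the structural lemma on frontier gates, which needs the case analysis of the positions of $t$ and the v-tree node of an $\wedge$-gate, and verifying via the splicing argument that the fibres are genuine rectangles in the deterministic case; the separator step and the constant cleanup are routine.
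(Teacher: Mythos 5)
Your strategy is the standard one behind this lemma (the paper itself does not reprove it, deferring to the cited works of Pipatsrisawat--Darwiche and Bova et al., whose arguments are exactly a balanced v-tree separator combined with a certificate/frontier-gate decomposition), and your separator step, your uniqueness-of-the-frontier-gate lemma, and the deterministic half giving $\Par_1(f)\le s$ are all sound. However, one step in the cover half fails as written: the inclusion $A_v \times B_v \subseteq f^{-1}(1)$. You define $B_v$ via \emph{some} partial certificate above $v$ whose $Y$-literals are satisfied by $\beta$, but such a partial certificate may itself contain $X$-literals --- namely whenever $v$ is a $\subseteq X$-gate that is not the frontier of the certificate it was cut from (for instance, one of two $X$-labelled children of an $\wedge$-gate). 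Splicing a certificate of $\C(v)$ satisfied by an arbitrary $\alpha\in A_v$ underneath then leaves those stray $X$-literals unconstrained by $\alpha$, so the spliced object need not witness $(\alpha,\beta)\in f^{-1}(1)$. Concretely, for $\langle\C\rangle=(x_1\wedge x_2)\wedge y_1$ structured with $X=\{x_1,x_2\}$ and $v$ the leaf $x_1$, your $A_v\times B_v$ contains the falsifying assignment $x_1=1$, $x_2=0$, $y_1=1$.

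The repair is local and you already hold the needed fact: redefine $B_v$ to range only over partial certificates above $v$ that contain \emph{no} $X$-literals (equivalently, keep only the rectangles indexed by gates that actually occur as frontiers). Your frontier lemma shows that for every $a\in f^{-1}(1)$ the part of its certificate above $\mathsf{fr}(C)$ has exactly this property, so the covering direction $f^{-1}(1)\subseteq\bigcup_v A_v\times B_v$ is unaffected, and the splicing argument now genuinely yields $A_v\times B_v\subseteq f^{-1}(1)$. With that one-clause change (and the usual care that certificates in a DAG-shaped circuit are taken as edge sets or unfoldings, so that ``above $v$'' and ``below $v$'' are well defined), the proof is correct and matches the cited arguments; the partition half needs no change, since there you only ever cut certificates at their true frontier.
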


\section{Proof of Theorems~\ref{thm: sep} and \ref{thm: main}} \label{sec: main}

\subsection{Proof Outline}

In \cite{goos2021lower} an analogue of Theorem~\ref{thm: main} is proved but for UFA. The approach goes via communication complexity in the fixed partition model. Our proof starts from this same piece of communication complexity. The difference is that we need to work in the best-partition model. It turns out we can lift the results from \cite{goos2021lower} to the best-partition model by adapting an ingenious construction of Knopp \cite{knop2017ips} based on the work of Segerlind \cite{DBLP:conf/coco/Segerlind08}.

The following is shown in the proof of \cite[Theorem 1]{goos2021lower} building on results from \cite{GLMWZ16} and \cite{DBLP:conf/focs/BalodisBG0K21}. 

\begin{theorem}[\cite{goos2021lower}] \label{thm: fixed_part}
For every $k \in \mathbb{N}$, there exists an integer $m = k^{O(1)}$, a Boolean function $g: \{0,1\}^m \to \{0,1\}$ and $\Pi$, a balanced partition of the inputs to $g$, such that the following properties hold.
\begin{enumerate}
\item $g$ is equivalent to an unambiguous $k$-DNF $\psi$ with $2^{\tO(k)}$ terms.
\item $\Non_0^{\Pi}(g)=\tOmega(k^2)$.
\end{enumerate}
\end{theorem}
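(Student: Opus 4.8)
Since this theorem is quoted from \cite{goos2021lower}, I will only sketch the argument; the plan is the now-standard one of combining a query-complexity separation with a communication-complexity lifting theorem. First I would invoke the query-complexity result of \cite{DBLP:conf/focs/BalodisBG0K21} (their resolution of the relevant form of the Alon--Saks--Seymour problem): for every $k$ there is a Boolean function $h\colon\{0,1\}^N\to\{0,1\}$ with $N=k^{O(1)}$ that is computed by an \emph{unambiguous} $k$-DNF with $2^{\tO(k)}$ terms --- so $h$ has small unambiguous $1$-certificate complexity and few minterms --- yet no cover of $h^{-1}(0)$ by subcubes uses fewer than $2^{\tOmega(k^2)}$ of them (equivalently, $\neg h$ has no DNF with fewer terms). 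This asymmetry between the $1$-side and the $0$-side is the genuinely hard ingredient.

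Next I would \emph{lift} $h$ to a two-party function. Fix a small balanced gadget $G\colon\{0,1\}^b\times\{0,1\}^b\to\{0,1\}$ with $b=\mathrm{polylog}(k)$ (e.g.\ inner product), put $g:=h\circ G^{N}\colon\{0,1\}^m\to\{0,1\}$ with $m=2bN=k^{O(1)}$, and let $\Pi=(X,Y)$ give Alice one coordinate of each gadget and Bob the other, so $|X|=|Y|=m/2$ and $\Pi$ is balanced. Applying a lifting theorem for nondeterministic communication complexity of the kind underlying \cite{GLMWZ16} turns the subcube-cover lower bound for $h^{-1}(0)$ into a rectangle-cover lower bound for $g^{-1}(0)$, giving $\Non_0^{\Pi}(g)=\tOmega(k^2)$ --- this is property~2. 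For property~1, take the unambiguous $k$-DNF $\psi_h$ for $h$ and, into each literal on a variable $z_i$, substitute the disjoint-subcube DNF of the gadget bit $G(X_i,Y_i)$ (or of its negation), which has $2^{\tO(1)}$ terms of width $\tO(1)$; distributing the conjunction over the $\le k$ literals of each term of $\psi_h$ multiplies the term count by $2^{\tO(k)}$ and the width by $\tO(1)$, and starting instead from a parameter $k'=\tilde\Theta(k)$ makes the final width genuinely $\le k$ while changing all bounds only by polylog factors. Unambiguity survives because for any assignment exactly one term of $\psi_h$, and within it exactly one inner term per gadget, is satisfied, hence exactly one combined term is.

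The main obstacle is not any of these compositional manipulations but the two imported black boxes. The first is the construction and $0$-side lower bound for $h$ in \cite{DBLP:conf/focs/BalodisBG0K21}, a delicate certificate-gadget/pointer construction whose analysis is the crux of that paper. The second is checking that an off-the-shelf nondeterministic lifting theorem genuinely applies here: that the gadget can be taken small enough that $g$ still has $k^{O(1)}$ variables and a $k$-DNF of the stated size, yet large enough for the simulation to go through, and that the query measure lower-bounded by \cite{DBLP:conf/focs/BalodisBG0K21} is precisely the one the lifting theorem converts into a lower bound on $\Non_0^{\Pi}$ rather than on some other communication measure. Everything beyond that is bookkeeping about how DNF width and term count behave under composition with a polylogarithmic gadget.
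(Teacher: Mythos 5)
The paper does not actually prove this theorem: it is imported wholesale from \cite{goos2021lower}, with the remark that the proof there builds on \cite{GLMWZ16} and \cite{DBLP:conf/focs/BalodisBG0K21}. Your sketch names exactly those two ingredients --- the unambiguous-DNF/Alon--Saks--Seymour query separation, and a query-to-communication lifting theorem applied to a composition with a polylogarithmic gadget under the natural balanced partition --- so it is the right outline of the external argument, and your treatment of property~1 (substituting the gadget's disjoint-subcube DNFs into the unambiguous $k$-DNF for $h$ and checking unambiguity survives) is correct. One substantive correction: the hard quantity supplied by \cite{DBLP:conf/focs/BalodisBG0K21} is the CNF \emph{width} of $h$, i.e.\ its $0$-certificate complexity $\tOmega(k^2)$, not the \emph{number} of subcubes needed to cover $h^{-1}(0)$; these are different measures, and it is the width bound that lifting theorems of the \cite{GLMWZ16} type convert into $\Cov_0(g)\ge 2^{\tOmega(k^2)}$, equivalently $\Non_0^{\Pi}(g)=\tOmega(k^2)$, whereas a subcube-count lower bound would not lift directly. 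Since you explicitly flagged ``which query measure does the lifting theorem consume'' as the point needing verification, this is a repairable imprecision rather than a gap in the approach.
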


One can show that $g$ admits a d-SDNNF of size $2^{\tO(k)}$. We would therefore like to show a lower bound on the size of an SDNNF equivalent to $\neg g$. However, we cannot use Lemma~\ref{lem: rectangle} since Theorem~\ref{thm: fixed_part} only gives lower bounds in the fixed-partition model. To get around this we transform $g$ into a new function $f$ which still admits a d-SDNNF of size $2^{\tO(k)}$ and with $\Non_0(f) \ge \Non_0^{\Pi}(g) = \tOmega(k^2)$.

\subsection{From Fixed Partition to Best Partition}

We now present the construction which allows us to build $f$. This is almost the same as that given in \cite{knop2017ips} based on the work from \cite{DBLP:conf/coco/Segerlind08}. The following things are different (1) we are now working with formulas in DNF rather than CNF, (2) we now want to ensure that the construction transforms an unambiguous DNF into an unambiguous DNF and (3) the notion of balancedness we use is different. We end up with the following result. 

\begin{theorem} \label{thm: fixed_to_best}
Let $\psi$ be an unambiguous n-variable $k$-DNF with $\ell$ terms. Then there 
exists an unambiguous $O(n^2)$ variable $O(kn)$-DNF $\psi'$ with $O(\ell n^{k+4})$ 
terms such that for $\delta \in \{0,1\}$ and any balanced partition $\Pi$ of 
the variables of $\psi$, $\Non_{\delta}(\psi') \ge \Non_{\delta}^{\Pi}(\psi)$.
\end{theorem}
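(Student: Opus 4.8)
The plan is to follow the construction of Knop~\cite{knop2017ips} (itself an adaptation of Segerlind~\cite{DBLP:conf/coco/Segerlind08}), whose whole purpose is to ``hide'' a fixed hard partition inside a new formula so that no partition of the new formula is easier. Three adjustments are needed, exactly those flagged above: the original construction is phrased for CNFs, so every duality has to be reversed to work with DNFs; we must check that the transformation sends an unambiguous DNF to an unambiguous DNF; and the counting has to be redone for the $|Z|/3$ notion of balanced partition.

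\textbf{The construction.} Write $\psi = \bigvee_{t=1}^{\ell} T_t$ on variables $x_1,\dots,x_n$. I introduce a pool of $O(n^2)$ fresh variables, consisting of ``data'' variables together with ``selector'' variables, and let $\psi'$ be the formula obtained from $\psi$ by substituting for each $x_i$ an indexing (multiplexer) gadget over $O(n)$ of the fresh variables: intuitively, $\psi'$ evaluates $\psi$ on $n$ bits read out of the data pool at the positions named by the selectors. Expanding this substitution into DNF, each $T_t$ — which mentions at most $k$ variables — contributes at most $O(n^{k+4})$ terms to $\psi'$, one per legal setting of the $\le k$ relevant selectors (up to a global bookkeeping factor), and each such term has width $O(kn)$, since it pins down $\le k$ selectors (each a unary index costing $O(n)$ literals) together with the $\le k$ data bits they address. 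This yields the claimed parameters. Unambiguity is preserved: if two terms of $\psi'$ are simultaneously satisfied, their shared selector literals force the underlying terms of $\psi$ to admit a common satisfying assignment, hence to be equal by unambiguity of $\psi$; the selectors then agree as well, so the two terms of $\psi'$ coincide.

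\textbf{The key lemma.} The heart of the argument is a combinatorial claim, essentially Knop's: for every balanced partition $\Pi' = (A',B')$ of $\var(\psi')$ there is a partial assignment $\rho$ and a bijection $\eta$ from $\var(\psi)$ onto the variables left unset by $\rho$ such that $\psi'|_{\rho}$ is $\psi$ up to the renaming $\eta$, and moreover $\eta(X) \subseteq A'$ while $\eta(Y) \subseteq B'$. The pool is sized, and the gadgets ``spread'' their variables, so that however $\Pi'$ cuts the fresh variables one can still collapse every gadget to a single surviving fresh variable placed on the side dictated by $\Pi$. Verifying this for the $|Z|/3$ notion of balancedness is where the counting must be re-run, and I expect it to be the main obstacle, since the construction has to simultaneously keep the DNF narrow, keep the term count at $O(\ell n^{k+4})$, and keep it unambiguous.

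\textbf{From the lemma to the statement.} Given the lemma, fix $\delta \in \{0,1\}$ and a balanced $\Pi'$. The pair $(\rho,\eta)$ defines a product map, sending $\underline x$ to an assignment $\alpha(\underline x)$ of the $A'$-coordinates and $\underline y$ to an assignment $\beta(\underline y)$ of the $B'$-coordinates (the fixed values off $\eta(\var(\psi))$, the bit $x_i$ on $\eta(x_i)$ for $x_i \in X$, and the bit $y_j$ on $\eta(y_j)$ for $y_j \in Y$), for which $\psi'(\alpha(\underline x),\beta(\underline y)) = \psi(\underline x,\underline y)$ for all inputs. Pulling an optimal cover of $(\psi')^{-1}(\delta)$ by $\Pi'$-rectangles back through this map gives a cover of $\psi^{-1}(\delta)$ by $\Pi$-rectangles of the same size, because $\eta$ respects the two sides so that the pullback of a $\Pi'$-rectangle is a $\Pi$-rectangle. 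Hence $\Cov^{\Pi}_\delta(\psi) \le \Cov^{\Pi'}_\delta(\psi')$, i.e. $\Non^{\Pi}_\delta(\psi) \le \Non^{\Pi'}_\delta(\psi')$; taking the minimum over balanced $\Pi'$ yields $\Non_\delta(\psi') \ge \Non^\Pi_\delta(\psi)$ for both values of $\delta$, which is Theorem~\ref{thm: fixed_to_best}.
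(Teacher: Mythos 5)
Your architecture is the paper's: this is the Knop/Segerlind construction (copies of variables addressed by selector bits, so that any balanced partition of the new formula can be ``collapsed'' back to the fixed hard partition of $\psi$), your unambiguity discussion is in the right spirit, and your final paragraph --- pulling back a $\Pi'$-rectangle cover through the product map $(\underline{x},\underline{y})\mapsto(\alpha(\underline{x}),\beta(\underline{y}))$ to get $\Cov^{\Pi}_{\delta}(\psi)\le\Cov^{\Pi'}_{\delta}(\psi')$ --- is exactly how the theorem follows from the key lemma.

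The gap is that the key lemma is where all the content lives, and you have asserted it rather than proved it, without identifying the mechanism that makes it true. Two things have to be reconciled: the set of ``index settings'' must be polynomially small (using all $n!$ permutations makes $\psi'$ exponentially large, as the paper's naive $\perm_{S_n}$ discussion shows), yet rich enough that for \emph{every} balanced partition $\Gamma=(X',Y')$ some single setting places, for each $i$ and each side, a copy of $x_i$ on that side. A static multiplexer with disjoint per-variable pools fails here: an adversarial balanced partition can put an entire pool on one side. The paper's resolution --- which is also where your otherwise unexplained $n^{4}$ factor in the term count comes from --- is to let a \emph{pairwise-independent} family of permutations act on the whole pool of copies, namely the $|\mathcal{P}|=n'(n'-1)=O(n^4)$ affine maps $x\mapsto ax+b$ over $\mathbb{F}_{2^t}$ (Lemma~\ref{lem: indperm}), and to prove the spreading property of Claim~\ref{claim: perm} by a second-moment (Chebyshev) argument; the relaxed $|Z|/3$ balancedness only changes constants there. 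Your proposal names neither the family nor the probabilistic argument, and explicitly defers ``the counting'' as the main obstacle, so the proof is not complete as written. Two smaller inaccuracies: the selectors encode a permutation in binary, costing $O(\log n)$ literals, not $O(n)$ as in your ``unary index''; the $O(kn)$ width actually comes from the negated-copy literals $\neg y_{i,j'}$ that must be added to each term to restore unambiguity after distributing the $\bigvee_j y_{i,j}$ substitution --- a step your unambiguity argument implicitly relies on (two terms derived from the same term of $\psi$ with different copy choices would otherwise be simultaneously satisfiable) but does not make explicit.
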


We next show how this implies Theorem~\ref{thm: main}.

\begin{proof}[Proof of Theorem~\ref{thm: main}]
Fix some $k \in \mathbb{N}$ and let $g: \{0,1\}^m \to \{0,1\}$ be the 
function from Theorem~\ref{thm: fixed_part}. Then we know there is some 
equivalent unambiguous $k$-DNF $\psi$ with $2^{\tO(k)}$ terms. Let $f 
\equiv \psi'$. By Theorem~\ref{thm: fixed_to_best} and since $n = k^{O(1)}$, 
$\psi'$ has $2^{\tO(k)}$ terms. Then we can form a d-SDNNF equivalent to 
$f$ as follows. Fix any v-tree $T$ over $\var(\psi')$. Then since every term of $\psi'$ is a conjunction of $O(km)$ literals, they all admit a d-DNNF respecting $T$ of size $O(km)$. By taking the disjunction of all such d-DNNF we get a d-DNNF for $\psi'$ respecting $T$ of size $2^{\tO(k)}:= n$. Here determinism follows as $\psi'$ is unambiguous.

But also by Theorem~\ref{thm: fixed_part}, $\Non_0^{\Pi}(g) = \tOmega(k^2)$. So applying Theorem~\ref{thm: 
fixed_to_best} we obtain that $\Non_0(f) \ge \Non_0^{\Pi}(g) = \tOmega(k^2)$. So $ 
\Cov_0(f) = 2^{\Non_0(f)} = 2^{\tOmega(k^2)}$. Therefore, by 
Lemma~\ref{lem: rectangle}, any SDNNF for $\neg f$ has size at least 
$2^{\tOmega(k^2)} = n^{\tOmega(\log n)}$. 
\end{proof}

It only remains to prove Theorem~\ref{thm: fixed_to_best}. 
\subsection{Idea of the Construction}

One way to try and transfer bounds from the 
fixed partition 
model to the best partition model is to extend a function to include a 
permutation as part of the input. A naive way of doing this is as 
follows. Let $\psi(x_1, \dots, x_n)$ be a propositional formula. Associate a 
binary string of length $m:= \log_2(n!)$ to each 
of the permutations of these variables and for $\sigma \in S_n$  write $
\rep(\sigma)_i$ for the $i$th bit of the string associated with $\sigma$. We define
\begin{align*}
&\perm_{S_n}(\psi)(z_1, \dots, z_m, x_1, \dots, x_n) \equiv \\ &\bigwedge_{\sigma 
\in S_n} \left( \bigwedge_{i=1}^m (z_i = \rep(\sigma)_i) \rightarrow 
\psi(x_{\sigma(1)}, \dots, x_{\sigma(n)}) \right).
\end{align*}

What is the idea? Let $\Pi = 
(X, Y)$ be a balanced partition of $\var(\psi)$ and $\Gamma = (X',Y')$ be any balanced partition of $\var(\perm_{S_n}(\psi))$ such that $|X| = |\{x_i \, \mid \, x_i \in X'\}|$. Suppose we have a non-deterministic protocol for $\perm_{S_n}(\psi)$ under partition $\Gamma$ of communication complexity $c$. We now define a non-deterministic protocol for $
\psi$ under partition $\Pi$.
Let $\sigma \in S_n$ be the permutation such that $X = \{\sigma(x_i) \, \mid \, 
x_i \in X'\}$. Suppose we have an input for $\psi$, $x_i \to a_i$, $i \in [n]$. Run the protocol for $\perm_{S_n}(\psi) : X' \times Y' \to \{0,1\}$ on the input where $z_i \to \rep(\sigma)_i$ and $x_{\sigma(i)} \to a_i$. By definition, this is a protocol for $\psi : X' \times Y' \to \{0,1\}$ of communication complexity $c$. Therefore, $\Non^{\Pi}_1(\perm_{S_n}(\psi)) \ge \Non^{\Gamma}_1(\psi)$. But there is an issue: the size of $\perm(S_n)(\psi)$ is exponential in $n$. So our proof wouldn't go through with this construction because there is no reason to think that $\psi$ admitting a small d-SDNNF implies that $\perm_{S_n}(\psi)$ admits such a representation.\footnote{Another issue is that this argument only gives lower bounds for partitions where $|X| = |\{x_i \, \mid \, x_i \in X'\}|$.}

The obvious solution is to consider a smaller set of permutations. But then the $\sigma$ such that $X = \{\sigma(x_i) \, \mid \, 
x_i \in X'\}$ might not be in our set. To get around this we, following \cite{knop2017ips}, \emph{first} add copies of variables to our original formula and \emph{then} permute these new variables. This effectively increases the number of permutations we can reach. 

\subsection{The Construction}

\subsubsection{Step 1: Making Copies of Variables}

Let $\psi$ be an unambiguous DNF formula on $n$ variables. We replace every occurrence of variable $x_i$ by a disjunction of $m$ fresh variables $\bigvee_{j \in [m]} y_{i,j}$, where $m = cn$, for some sufficiently large constant $c$. 
Denote the subformula obtained from a term $C$ of $\psi$ by $C^{\vee}$. 
The resulting formula is \emph{not} a DNF. To change this first we use  
distributivity to expand out our formula into a DNF $\phi$. If a term of 
$\phi$ is the result of expanding out $C^{\vee}$ we say it is 
\emph{derived} from $C$. As this DNF may not be unambiguous we need an 
extra step not in \cite{knop2017ips}. Take a term $C$ of 
$\phi$. For each positive literal $y_{i,j}$ in $C$ add conjuncts $\neg 
y_{i,j'}$ for every $j'\neq j$ to get a term $C^{u}$. If $C$ is derived 
from $D$ we also say that $C^{u}$ is derived from $D$. Repeat this for 
every  term of $\phi$. We denote the resulting DNF by $\psi^{\vee}$. 

\begin{lemma}
If $\psi$ is an unambiguous $n$-variable $k$-DNF with $\ell$ terms then $\psi^{\vee}$ is an $O(n^2)$-variable unambiguous $O(kn)$-DNF with $O(\ell n^{k})$ terms.
\end{lemma}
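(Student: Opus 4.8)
The plan is to check the four assertions one at a time --- $O(n^2)$ variables, width $O(kn)$, $O(\ell n^k)$ terms, and unambiguity --- by unwinding the two stages of the construction: replacing each term $C$ of $\psi$ by $C^{\vee}$ and expanding distributively into a DNF $\phi$, and then replacing each term $C$ of $\phi$ by $C^{u}$ and disjoining to get $\psi^{\vee}$. The first three assertions are pure bookkeeping; the fourth is where the argument has content.

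For the counting, I would first note that the variables of $\psi^{\vee}$ are exactly the $nm = cn^2$ fresh variables $y_{i,j}$, so there are $O(n^2)$ of them. For width and term count, fix a term $D$ of $\psi$ with $a$ positive and $b$ negative literals, so $a + b \le k$. Expanding $D^{\vee}$ replaces each positive literal by a size-$m$ disjunction (contributing one literal per distributive choice) and each negative literal by the size-$m$ conjunction of the corresponding negated variables; hence every term $C$ of $\phi$ derived from $D$ is a conjunction of $a + bm$ literals, and there are at most $m^{a} \le m^{k}$ such terms, one for each choice of a disjunct for each positive literal of $D$. Passing from $C$ to $C^{u}$ appends $m-1$ literals for each of the $a$ positive literals and does not change the number of terms, so each term of $\psi^{\vee}$ has $(a+b)m \le km = O(kn)$ literals, and summing over the at most $\ell$ terms of $\psi$ gives at most $\ell m^{k} = O(\ell n^{k})$ terms of $\psi^{\vee}$.

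The main step is unambiguity. To an assignment $\beta$ of the $y$-variables I would associate the assignment $\alpha$ of $x_1,\dots,x_n$ defined by $\alpha(x_i) = 1$ iff some $y_{i,j}$ is true under $\beta$. Suppose $\beta$ satisfies two terms $C_1^{u}, C_2^{u}$ of $\psi^{\vee}$, where $C_t^{u}$ is built from a term $C_t$ of $\phi$ derived from a term $D_t$ of $\psi$, and $C_t$ selects the disjunct $y_{i,\rho_t(i)}$ for each positive literal $x_i$ of $D_t$. Reading off the literals of $C_t^{u}$ shows that for each positive literal $x_i$ of $D_t$, $\beta$ makes exactly $y_{i,\rho_t(i)}$ true among $\{y_{i,j}\}_{j\in[m]}$ (so $\alpha(x_i)=1$), and for each negative literal $\neg x_i$ of $D_t$, $\beta$ makes every $y_{i,j}$ false (so $\alpha(x_i)=0$). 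Hence $\alpha$ satisfies both $D_1$ and $D_2$, and unambiguity of $\psi$ forces $D_1 = D_2 =: D$; then for each positive literal $x_i$ of $D$ the unique $y_{i,j}$ true under $\beta$ is simultaneously $y_{i,\rho_1(i)}$ and $y_{i,\rho_2(i)}$, so $\rho_1 = \rho_2$, giving $C_1 = C_2$ and therefore $C_1^{u} = C_2^{u}$.

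I expect the only delicate point to be the bookkeeping needed to make the $\rho_t$ well defined, i.e.\ checking that from a term $C$ of $\phi$ one can unambiguously recover the term $D$ of $\psi$ it is derived from together with the selection function $\rho$; this uses $m \ge 2$ (which holds since $c$ is a sufficiently large constant), so that a positive and a negative occurrence of the same variable inside $C$ cannot be conflated. There is no serious obstacle: the construction is designed precisely so that the projection $\beta \mapsto \alpha$ transports satisfaction of $\psi^{\vee}$ back to satisfaction of $\psi$.
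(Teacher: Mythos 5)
Your proposal is correct and follows essentially the same route as the paper: both arguments push a satisfying assignment of $\psi^{\vee}$ down to an induced assignment of $\psi$, invoke unambiguity of $\psi$ to pin down the originating term $D$, and then use the explicit form of the terms derived from $D$ (each positive variable forced to a unique true copy by the added negative literals, each negative variable forced entirely false) to conclude the two terms coincide; the counting is identical. No gaps.
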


\begin{proof}
To see that $\psi^{\vee}$ is unambiguous suppose some assignment $\alpha$ 
of the variables of $\psi^{\vee}$ satisfies a term $C$. Define the 
assignment $\beta = \beta(\alpha)$ on the variables of $\psi$ by $
\beta(x_i) = 1$ if and only if $\alpha(\bigvee_{j \in [m]} y_{i,j}) = 1$. 
We know that $C$ was derived from some $D$. Then clearly $\beta$ satisfies 
$D$. Since $\psi$ is unambiguous this is the unique term satisfied by $
\beta$. Therefore, if $\alpha$ satisfies some $C'$ then $C'$ must also be 
derived from $D$. We have that there are $I_p, I_n \subseteq [n]$ with $|I_p 
\cup I_n| = O(k)$  such that $D^{\vee}$ is equal to 
\begin{equation*}
\bigwedge_{i \in I_p} \bigvee_{j=1}^m y_{i,j} \wedge \bigwedge_{i \in I_n} \neg \bigvee_{j=1}^m y_{i,j}
\equiv 
\bigwedge_{i \in I_p} \bigvee_{j=1}^m y_{i,j} \wedge \bigwedge_{i \in I_n} \bigwedge_{j=1}^m \neg y_{i,j}
\end{equation*}

It follows that each term of $\psi^{\vee}$ deriving from $D$ is of the form
\begin{equation*} \label{eq: term_form}
\bigwedge_{i \in I_p} y_{i, j_i} \wedge \bigwedge_{j \neq j_i} \neg y_{i,j} \wedge \bigwedge_{i \in I_n} \bigwedge_{j=1}^m \neg y_{i,j}
\end{equation*}
where $(j_i)_{i \in I_p}$ is some sequence of elements of $[m]$. It is therefore easy to see that $C = C'$ and so $\psi^{\vee}$ is unambiguous. Moreover, by the above each term contains $O(km) = O(kn)$ variables. Finally, by construction, there are at most $m^k$ terms of $\phi^{\vee}$ derived from each term of $\phi$.
\end{proof}

\subsubsection{Step 2: Adding permutations}

Let $|\mathsf{var}(\psi^{\vee})| =n'$. For notational convenience we denote $y_{i,j} := v_{im+j}$ and $\mathsf{var}(\psi^{\vee})$ by $V$. For simplicity\footnote{The general case is not much more difficult: we just add extra `dummy' variables until we reach a power of two. This does not change anything, other than making the notation slightly messier, see \cite[Theorem 4.2]{knop2017ips}.} assume $n' = 2^t$ for some integer $t$. Let $\mathbb{F}$ be the unique field of order $n'$ and $\mathcal{P}$ be the set of mapping on $\mathbb{F}$ with $x \to ax +b$ , $a,b \in \mathbb{F}$ and $a \neq 0$. The reason for using $\mathcal{P}$ is that it is a set of independent permutations in the following sense.

\begin{lemma} \label{lem: indperm} \cite{wegman1981new}
Every mapping in $\mathcal{P}$ is a permutation and $|\mathcal{P}| = n' \cdot (n'-1)$. Moreover, for any $a,b,c,d \in [n']$ with $a \neq b$ and $c \neq d$,
\[
\underset{\sigma \in \mathcal{P}}{\mathsf{Pr}} [\sigma(a) = c, \,\sigma(b) = d] = \frac{1}{|\mathcal{P}|}
\]
\end{lemma}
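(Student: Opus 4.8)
The plan is to verify the three claims directly from the definition of $\mathcal{P}$ as the set of affine maps $\sigma_{\alpha,\beta} \colon x \mapsto \alpha x + \beta$ over $\mathbb{F}$ with $\alpha \neq 0$; I use $\alpha,\beta$ for the map parameters to avoid clashing with the elements $a,b,c,d$ appearing in the statement. First, each $\sigma_{\alpha,\beta}$ is a permutation because $\alpha$ is invertible in $\mathbb{F}$, so $x \mapsto \alpha^{-1}(x-\beta)$ is a two-sided inverse. This is the one place we use that $\mathbb{F}$ is genuinely a field, which holds since $n' = 2^t$ is a prime power.

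For the cardinality, I would show that $(\alpha,\beta) \mapsto \sigma_{\alpha,\beta}$ is injective: evaluating $\sigma_{\alpha,\beta}$ at $0$ recovers $\beta$, and then evaluating at $1$ recovers $\alpha$. Since there are $n'-1$ choices for the nonzero $\alpha$ and $n'$ choices for $\beta$, and these pairs are in bijection with $\mathcal{P}$, we get $|\mathcal{P}| = n' \cdot (n'-1)$.

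For the pairwise-independence statement, fix $a \neq b$ and $c \neq d$ in $\mathbb{F}$ and count the pairs $(\alpha,\beta)$ with $\sigma_{\alpha,\beta}(a) = c$ and $\sigma_{\alpha,\beta}(b) = d$, i.e.\ solutions of the linear system $\alpha a + \beta = c$, $\alpha b + \beta = d$. Subtracting the two equations gives $\alpha(a-b) = c-d$; since $a \neq b$ the element $a-b$ is invertible, so $\alpha = (c-d)(a-b)^{-1}$ is uniquely determined, and it is nonzero precisely because $c \neq d$, so this unique solution actually lies in $\mathcal{P}$. Then $\beta = c - \alpha a$ is forced. Hence exactly one $\sigma \in \mathcal{P}$ satisfies $\sigma(a) = c$ and $\sigma(b) = d$, which under the uniform distribution on $\mathcal{P}$ gives probability $1/|\mathcal{P}|$.

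There is no real obstacle here; the only things to be careful about are the notational overload between the map parameters and the evaluation points $a,b,c,d$, and checking that the derived $\alpha$ is nonzero so that the unique solution is a legitimate member of $\mathcal{P}$. Everything else is a one-line finite-field computation.
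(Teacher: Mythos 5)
Your proof is correct; the paper itself gives no proof of this lemma, citing it directly from the Wegman--Carter reference, and your direct verification (invertibility of the leading coefficient for the permutation property, injectivity of the parametrization for the count, and unique solvability of the $2\times 2$ linear system with the derived $\alpha \neq 0$ for pairwise independence) is the standard argument one finds there. The only quibble is your remark that the permutation step is ``the one place'' the field structure is used --- you also invert $a-b$ in the independence step, as you yourself note --- but this does not affect correctness.
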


Elements of $\mathcal{P}$ may be represented by binary strings of length $2t$ such that the first $t$ bits are not all zero; the $i$th bit of the representation of $\sigma \in \mathcal{P}$  is denoted $\rep(\sigma)_i$. 

Let $C$ be a term of $\psi^{\vee}$ with $C = \bigwedge_{i \in I} a_i$, for some $a_i \in \{v_i,  \neg v_i\}$. Then for each $\sigma \in \mathcal{P}$ and $C$ a term of $\psi^{\vee}$ we define a 
term
\[
\perm_{\pi}(C):= \bigwedge_{i=1}^{2t} \left(z_i = \rep(\sigma)_i \right) \wedge \bigwedge_{i \in I} a_{\sigma(i)}
\]
To form $\psi'$ we take the disjunction of every $\perm_{\pi}(C)$ with $\pi \in \mathcal{P}$, $C$ a term of $\psi^{\vee}$. Note, that if $\phi^{\vee}$ is unambiguous so is $\psi'$ and that $|\mathcal{P}| = O(n^4)$. The following is immediate.

\begin{lemma} \label{lem: well_def}
If $\psi$ is a $n$-variable unambiguous $k$-DNF with $\ell$ terms then $\psi'$ is a $O(n^2)$-variable unambiguous $O(kn)$-DNF with $O(\ell n^{k+4})$ terms.
\end{lemma}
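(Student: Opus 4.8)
The plan is to verify the three size parameters by a direct count and then to check unambiguity by pulling a satisfying assignment back along the permutation selected by the $z$-block.

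\emph{Counting.} By the preceding lemma, $\psi^{\vee}$ is an unambiguous $O(kn)$-DNF on $n' = O(n^2)$ variables with $O(\ell n^{k})$ terms, and by construction $\psi'$ has variable set $\{z_1,\dots,z_{2t}\}\cup V$ where $|V| = n'$. Since $2t = 2\log_2 n' = O(\log n)$, this is $n' + O(\log n) = O(n^2)$ variables. Each term $\perm_{\sigma}(C)$ consists of the $2t$ literals encoding $z_i = \rep(\sigma)_i$ together with the $|I|$ literals of $C$ renamed by $\sigma$; as $C$ has $O(kn)$ literals and $2t = O(\log n)$, every term of $\psi'$ has $O(kn)$ literals. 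Finally there is exactly one term $\perm_{\sigma}(C)$ for each pair $(\sigma, C)$ with $\sigma \in \mathcal{P}$ and $C$ a term of $\psi^{\vee}$; by Lemma~\ref{lem: indperm}, $|\mathcal{P}| = n'(n'-1) = O(n^4)$, so $\psi'$ has $O(\ell n^{k})\cdot O(n^4) = O(\ell n^{k+4})$ terms.

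\emph{Unambiguity.} Suppose an assignment $\alpha$ of $\{z_1,\dots,z_{2t}\}\cup V$ satisfies both $\perm_{\sigma}(C)$ and $\perm_{\tau}(C')$. Comparing the $z$-blocks forces $\rep(\sigma) = \rep(\tau)$, and since $\sigma \mapsto \rep(\sigma)$ is injective on $\mathcal{P}$ we conclude $\sigma = \tau$. Write $C = \bigwedge_{i \in I} a_i$ with each $a_i \in \{v_i, \neg v_i\}$ and define the assignment $\alpha^{\sigma}$ of $V$ by $\alpha^{\sigma}(v_j) := \alpha(v_{\sigma(j)})$; since $\sigma$ is a permutation (Lemma~\ref{lem: indperm}), $\alpha^{\sigma}$ satisfies the literal $a_i$ precisely when $\alpha$ satisfies $a_{\sigma(i)}$. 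Hence $\alpha^{\sigma}$ satisfies $C$, and by the identical argument it satisfies $C'$; unambiguity of $\psi^{\vee}$ then gives $C = C'$, so $\perm_{\sigma}(C) = \perm_{\tau}(C')$ and $\psi'$ is unambiguous.

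There is essentially no obstacle here — this is why the paper calls it immediate. The only two points that deserve a moment's care are that the $z$-encoding is injective on $\mathcal{P}$, so the $z$-block genuinely pins down which permutation was used, and that renaming variables along a permutation is a bijection on assignments, so that satisfaction of the renamed term transfers back to a term of $\psi^{\vee}$; both are routine once written out. Note that the pairwise-uniformity (``independence'') clause of Lemma~\ref{lem: indperm} is not used in this lemma — only the facts that the maps in $\mathcal{P}$ are permutations and that $|\mathcal{P}| = n'(n'-1)$ — and that clause is needed only later for the communication-complexity lower bound.
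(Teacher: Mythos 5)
Your proof is correct and fills in exactly the routine counting and permutation-pullback argument that the paper compresses into the single remark that the lemma is ``immediate'' given the bound on $|\mathcal{P}|$ and the unambiguity of $\psi^{\vee}$. The two points you flag as needing care (injectivity of $\sigma \mapsto \rep(\sigma)$ so the $z$-block pins down the permutation, and transferring satisfaction back along the renaming) are indeed the only non-trivial content, and you handle both correctly.
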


\subsection{Proof of Theorem~\ref{thm: fixed_to_best}}

\begin{proof}
Fix two arbitrary balanced partitions $\Pi = (X, Y)$ and $\Gamma = (X', Y')$ of the variables of $\psi$ and $\psi'$ respectively. It is enough to show that if there is a non-deterministic protocol for $\psi'$ (resp. $\neg \psi'$) under partition $\Gamma$ then there is a non-deterministic protocol for $\psi$ (resp. $\neg \psi$) under partition $\Pi$ with the same communication complexity. The key is the following.

\begin{claim} \label{claim: perm}
There is a permutation $\sigma \in \mathcal{P}$, such that for any $i\in [n]$ and $k\in \{0,1\}$, there is a $j$ such that $y_{i,j}$ is mapped to a variable from $\Pi_k$ by $\sigma$.
\end{claim}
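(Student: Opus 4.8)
The plan is to prove Claim~\ref{claim: perm} by the probabilistic method: pick $\sigma\in\mathcal{P}$ uniformly at random and show that the stated property holds with positive probability. (I read the claim's ``$\Pi_k$'' as the part $\Gamma_k$ of $\Gamma=(\Gamma_0,\Gamma_1)$ restricted to $V:=\var(\psi^{\vee})$; this is what the later simulation of a $\Gamma$-protocol by a $\Pi$-protocol needs.) Identify $\mathbb{F}$ with the index set $\{1,\dots,n'\}$ of $V=\{v_1,\dots,v_{n'}\}$, so $\sigma$ acts on these indices and hence on the copies $y_{i,j}=v_{im+j}$. For $i\in[n]$ write $B_i:=\{im+1,\dots,im+m\}$ for the block of indices of the copies of $x_i$; these blocks are pairwise disjoint and each has size $m=cn$. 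For $k\in\{0,1\}$ let $S_k\subseteq\{1,\dots,n'\}$ be the set of indices $\ell$ with $v_\ell\in\Gamma_k$. The claim then asks for a $\sigma\in\mathcal{P}$ with $\sigma(B_i)\cap S_k\neq\emptyset$ for every $i\in[n]$ and $k\in\{0,1\}$.

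The first ingredient is that $|S_0|,|S_1|\ge n'/4$ once $n$ is large: since $\Gamma$ is a \emph{balanced} partition of $\var(\psi')$, each of its two parts has at least $|\var(\psi')|/3\ge n'/3$ variables, and the only variables of $\psi'$ outside $V$ are the $2t=O(\log n')$ variables $z_1,\dots,z_{2t}$, whose deletion shifts each part by a lower-order amount. The second, and main, ingredient is a second-moment bound. Fix $i$ and $k$ and set $Z=Z_{i,k}:=\sum_{p\in B_i}\mathbf{1}[\sigma(p)\in S_k]$. By Lemma~\ref{lem: indperm} each $\sigma(p)$ is uniform on $\{1,\dots,n'\}$, so $\mathbb{E}[Z]=m\,|S_k|/n'\ge m/4$; and, again by Lemma~\ref{lem: indperm}, for $p\neq q$
\[
\Cov\!\big(\mathbf{1}[\sigma(p)\in S_k],\,\mathbf{1}[\sigma(q)\in S_k]\big)=\frac{|S_k|(|S_k|-1)}{n'(n'-1)}-\Big(\frac{|S_k|}{n'}\Big)^{2}<0 ,
\]
whence $\mathrm{Var}(Z)\le\sum_{p\in B_i}\mathrm{Var}\big(\mathbf{1}[\sigma(p)\in S_k]\big)\le\mathbb{E}[Z]$. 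Chebyshev's inequality then gives $\Pr[Z_{i,k}=0]\le\mathrm{Var}(Z)/\mathbb{E}[Z]^{2}\le 1/\mathbb{E}[Z]\le 4/(cn)$, and a union bound over the $2n$ pairs $(i,k)$ bounds the failure probability by $8/c$, which is $<1$ as soon as $c>8$ --- a constraint I would simply fold into the ``sufficiently large constant'' $c$ of Step~1. Hence some $\sigma\in\mathcal{P}$ satisfies $Z_{i,k}\ge1$ for all $i,k$, which unpacks to the statement of the claim.

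The step I expect to be the real obstacle is the variance estimate, since the indicators $\mathbf{1}[\sigma(p)\in S_k]$ for $p\in B_i$ are \emph{not} independent: $\mathcal{P}$ is only pairwise independent, so one cannot multiply probabilities, and the first-order Bonferroni lower bound on $\Pr[\bigcup_{p\in B_i}\sigma(p)\in S_k]$ is vacuous once $m$ is large. What rescues the argument is the sign of the covariance displayed above --- pairwise independence forces these indicators to be pairwise \emph{negatively} correlated --- so $\mathrm{Var}(Z)$ is dominated by the sum of the individual Bernoulli variances and Chebyshev closes the gap. The remainder is routine bookkeeping: confirming the variables $z_i$ are negligible against $n'=\Theta(n^2)$, and checking that the lower bound on $c$ required here is compatible with the finitely many other constraints on $c$ imposed elsewhere in the construction.
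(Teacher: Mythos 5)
Your proposal is correct and follows essentially the same route the paper indicates: a uniformly random $\sigma\in\mathcal{P}$, the pairwise independence of Lemma~\ref{lem: indperm} to control the second moment, Chebyshev's inequality, and a union bound over the $2n$ pairs $(i,k)$, with the relaxed balancedness absorbed by noting that each side of $\Gamma$ still contains a constant fraction of $V$. The paper delegates these details to \cite[Theorem 4.2]{knop2017ips}, and your writeup (including the observation that the indicators are pairwise negatively correlated, so the variance is at most $\mathbb{E}[Z]$) is a faithful expansion of that argument.
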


Assume the claim and let $\sigma$ be such a permutation. Let $v_{r(i,k)}$ denote some variable $y_{i,j}$ that is mapped to an element of $\Pi_k$ by the permutation $\sigma$. Now consider the following protocol for $\psi$ under the partition $\Pi$. On input $x_1 \to a_1, \dots, x_n \to a_n$ it runs the protocol for $\psi'$ under partition $\Gamma$ on the input such that
\begin{itemize}
\item the $z_i$ encode $\sigma$, i.e., $z_i$ is set to $\rep(\sigma)_i$,
\item if $x_i \in \Gamma_k$, $v_{r(i,k)}$ is set to be equal to $a_i$ and
\item  every other variable in $V$ is set to be zero.
\end{itemize}
By the construction of $\psi'$ this is a correct protocol for $\psi$. The case for $\neg \psi$ is identical except now $\neg \psi'$ plays the role of $\psi'$.

Claim~\ref{claim: perm} follows by a relatively simple probabilistic argument: the two key tools are Chebyshev's inequality and Lemma~\ref{lem: indperm}. The proof is almost identical to \cite[Theorem 4.2.]{knop2017ips}. The only difference is that we need to use  our more relaxed notion of balancedness but an inspection of the proof shows that everything goes through. 
\end{proof}

Theorem~\ref{thm: sep} follows almost immediately.

\begin{proof}[Proof of Theorem~\ref{thm: sep}]
Fix some $n \in \mathbb{N}$, let $f$ be the function given by Theorem~\ref{thm: main} and let $\C$ be an SDD equivalent to $f$. Then we may complement this SDD to get an SDD for $\neg f$ of size polynomial in $|\C|$. Since SDD is a subset of d-SDNNF by Theorem~\ref{thm: main} this must be of size $n^{\tO(\log(n)}$ and the result follows.
\end{proof}

\subsection{Disjunction and Existential Quantification}
In the appendix, we also prove the following theorem.
\begin{theorem}[Disjunction] \label{thm: union}
For every $n \in \N$, there exists Boolean functions $f,g$ sharing a common domain with the following properties.
\begin{enumerate}
\item There is a v-tree $T$ such that $f$ and $g$ both admit d-DNNFs of size $n$ respecting $T$.
\item Any d-SDNNF equivalent to $f \vee g$ has size $n^{\tOmega(\log n)}$.
\end{enumerate}
\end{theorem}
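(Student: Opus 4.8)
The plan is to reduce the statement to a \emph{covering-versus-partitioning} separation in communication complexity and then lift it to the best-partition model with exactly the tools of Section~\ref{sec: main}.

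First I would reduce items~1 and~2 to the following: for some balanced partition $\Pi$ of a variable set $W$ with $|W|=k^{O(1)}$, there exist unambiguous $k$-DNFs $\chi_1,\chi_2$ over $W$, each with $2^{\tO(k)}$ terms, such that $\Par_1^{\Pi}(\chi_1\vee\chi_2)=2^{\tOmega(k^2)}$. Given such $\chi_1,\chi_2$, apply the copy step and the permutation step of Section~\ref{sec: main} to each of $\chi_1,\chi_2$ separately, over the common extended variable set and with one common set of affine maps $\mathcal{P}$, and set $f:=\chi_1'$, $g:=\chi_2'$. Both steps distribute over a disjunction of terms, so $f\vee g=(\chi_1\vee\chi_2)'$, and $f,g$ are unambiguous $k^{O(1)}$-DNFs with $2^{\tO(k)}$ terms by Lemma~\ref{lem: well_def} applied to each. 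Hence, for \emph{every} v-tree $T$ over the common variable set, both $f$ and $g$ admit a d-DNNF of size $n:=2^{\tO(k)}$ respecting $T$ — disjoin the $k^{O(1)}$-size d-DNNFs of the individual terms, with determinism supplied by unambiguity — which is item~1 with a common $T$. For item~2, the protocol-transfer argument behind Claim~\ref{claim: perm} converts any rectangle \emph{partition} of $\sat\big((\chi_1\vee\chi_2)'\big)$ under an arbitrary balanced $\Gamma$ into a rectangle partition of $\sat(\chi_1\vee\chi_2)$ under $\Pi$ of the same size (it simply runs the given protocol on a relabelled input, which preserves unambiguity of the protocol), so $\Par_1(f\vee g)\ge\Par_1^{\Pi}(\chi_1\vee\chi_2)=2^{\tOmega(k^2)}$; by Lemma~\ref{lem: rectangle} any d-SDNNF for $f\vee g$ has size at least this, which is $n^{\tOmega(\log n)}$ since $\log n=\tO(k)$. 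The same pair also witnesses non-closure under existential quantification: $f\vee g=\exists x\,F$ for $F:=(\neg x\wedge f)\vee(x\wedge g)$, and $F$ is easy because conditioning preserves ``small d-DNNF respecting $T$''.

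It remains to exhibit $\chi_1$ and $\chi_2$. Since $\sat(\chi_1\vee\chi_2)$ is automatically a union of $2^{\tO(k)}$ many subcubes, it has a cheap rectangle cover; the content is that \emph{every} rectangle partition of it has size $2^{\tOmega(k^2)}$. This is precisely the communication-complexity form of the Alon--Saks--Seymour phenomenon — a structure that is a union of few bicliques but whose biclique-partition number is superpolynomially larger — which is also the engine behind the function of Theorem~\ref{thm: fixed_part}, via \cite{GLMWZ16,DBLP:conf/focs/BalodisBG0K21}. Concretely I would start from (a variant of) that hard instance, whose edge set already carries an inexpensive biclique cover by bounded-width subcubes, organise this cover into two families of pairwise point-disjoint subcubes, and let $\chi_1,\chi_2$ be the unambiguous $k$-DNFs whose terms are these two families; one then checks that the partition number of the union remains $2^{\tOmega(k^2)}$ and that the ambient partition meets the balancedness condition $|W|/3\le\min\{|X|,|Y|\}$.

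The main obstacle is this last step: arranging the hard instance so that its cheap cover decomposes into exactly two \emph{unambiguous} DNFs — equivalently, into two biclique partitions — while the combined partition number stays superpolynomial, and keeping the term width bounded so that the permutation construction applies with only a $2^{\tO(k)}$ blow-up in the number of terms. If a clean two-way split is not available, one can weaken item~1 to ``$f$ and $g$ each admit a small \emph{structured d-DNNF} over a common v-tree'', which only requires the cover to split into two families each computed by a small structured d-DNNF; and verifying that Claim~\ref{claim: perm}'s transfer preserves partitions (not merely covers, which is all Theorem~\ref{thm: fixed_to_best} literally states) is a routine inspection of its proof.
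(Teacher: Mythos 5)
Your reduction is essentially the one the paper uses: lift a fixed-partition lower bound on the (unambiguous) partition number of a union of two cheap unambiguous DNFs to the best-partition model via the copy-and-permute construction, observe that the construction commutes with disjunction so that $f\vee g=(\chi_1\vee\chi_2)'$, check that the protocol transfer behind Claim~\ref{claim: perm} preserves unambiguity (it does, since the simulating protocol just runs the given one on a single relabelled input), and finish with Lemma~\ref{lem: rectangle}. Your observations that any v-tree works for item~1 and that one must track partitions rather than covers are both correct and are exactly the points the paper's appendix relies on (it phrases the partition version via $\Un^{\Pi}_b:=\log_2\Par^{\Pi}_b$).

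The genuine gap is the base instance, which you yourself flag as the main obstacle: you do not actually produce two unambiguous $k$-DNFs $\chi_1,\chi_2$ with $2^{\tO(k)}$ terms and $\Par_1^{\Pi}(\chi_1\vee\chi_2)=2^{\tOmega(k^2)}$ for a fixed balanced $\Pi$; you only sketch how such a pair might be extracted from an Alon--Saks--Seymour-type instance, and you correctly note that splitting the cheap cover into exactly two \emph{unambiguous} families of bounded width while preserving the superpolynomial partition lower bound is the nontrivial part. This is precisely the content of \cite[Theorem 2]{goos2021lower}, restated in the appendix as Theorem~\ref{thm: fixed_or}: it already supplies $f,g$ with unambiguous $k$-DNF representations $\psi,\phi$ of $2^{\tO(k)}$ terms and $\Un_1^{\Pi}(f\cup g)=\tOmega(k^2)$ under a fixed balanced partition. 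So the obstacle you identify is real if one insists on rebuilding the instance from scratch, but the paper resolves it by citation rather than by construction; without that citation (or an equivalent worked-out construction) your argument does not yet yield the theorem.
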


The proof follows the same structure as that given above. In fact, 
Theorem~\ref{thm: union} almost implies\footnote{Note that in 
Theorem~\ref{thm: main} we prove lower bounds on the size of SDNNF 
equivalent to $\neg f$. Using  Theorem~\ref{thm: union} we could only get a 
lower bound on d-SDNNF. Still, this is, arguably, the most important part of 
the result. To go from Theorem~\ref{thm: union} to the result on negation  
one uses the equivalence $ f \vee g \equiv \neg(\neg f \wedge \neg g)$ and 
the tractability of $\wedge$ for d-SDNNF \cite{DBLP:conf/aaai/PipatsrisawatD08}.} Theorem~\ref{thm: main}; however we focused our 
expositional on negation for pedagogical reasons and since this is related 
to a long-standing open question. Namely, does (unstructured) d-DNNF admit 
polynomial time negation \cite{DBLP:journals/jair/DarwicheM02}? We discuss 
this question in more detail in the conclusion. We also obtain the 
following corollary.

\begin{corollary}[Existential Quantification] \label{thm: ex}
For every $n \in \N$, there exists a set $X$, a Boolean function $f: \{0,1\}^X \to \{0,1\}$ and a variable $x \in X$, such that
\begin{enumerate}
\item $f$ admits a d-SDNNF of size $n$ and
\item any d-SDNNF equivalent to $\exists xf$ has size $n^{\tOmega(\log n)}$.
\end{enumerate}
\end{corollary}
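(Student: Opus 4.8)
The plan is to obtain Corollary~\ref{thm: ex} as an immediate consequence of Theorem~\ref{thm: union}, using the classical encoding of a disjunction as an existential quantification over a fresh \emph{selector} variable. Let $f,g : \{0,1\}^X \to \{0,1\}$ and the v-tree $T$ over $X$ be the objects supplied by Theorem~\ref{thm: union}, so that $f$ and $g$ each admit a d-DNNF of size $n$ respecting $T$, while every d-SDNNF equivalent to $f \vee g$ has size $n^{\tOmega(\log n)}$. Pick a variable $x \notin X$, put $X' := X \cup \{x\}$, and let $h : \{0,1\}^{X'} \to \{0,1\}$ be the function
\[
h \;=\; (x \wedge f) \;\vee\; (\neg x \wedge g).
\]
Then $\exists x\, h \equiv f \vee g$ by Definition~\ref{def: trans}: an assignment $\alpha : X \to \{0,1\}$ lies in $\pi_{\{0,1\}^X}(\sat(h))$ iff its extension by $x \mapsto 1$ satisfies $h$ (equivalently $\alpha \in \sat(f)$) or its extension by $x \mapsto 0$ does (equivalently $\alpha \in \sat(g)$). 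Note that the role of the fresh variable $x$ is essential: the naive choice $h = f \vee g$ need not admit a small d-SDNNF — that is exactly the content of Theorem~\ref{thm: union} — whereas adding $x$ turns the top $\vee$ into a deterministic one.

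Next I would verify that $h$ admits a d-SDNNF of size $O(n)$. Take d-DNNFs $\C_f$ and $\C_g$ for $f$ and $g$ respecting $T$, add leaves labelled $x$ and $\neg x$, form an $\wedge$-node whose children are the leaf $x$ and the source of $\C_f$, another $\wedge$-node whose children are the leaf $\neg x$ and the source of $\C_g$, and join the two $\wedge$-nodes by a $\vee$-node. Decomposability at the two new $\wedge$-nodes holds because $x$ is fresh, so $\{x\} \cap \var(\C_f) = \{x\} \cap \var(\C_g) = \emptyset$; combined with decomposability of $\C_f,\C_g$ the whole circuit is a DNNF. Determinism at the new $\vee$-node holds since its two arguments disagree on the value of $x$, and determinism at the $\vee$-nodes inside $\C_f,\C_g$ is preserved when the domain is enlarged to $X'$ (a satisfying-set equality with domain $X$ lifts to the same equality with domain $X'$, the new coordinate being free on both sides). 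For structuredness, let $T'$ be the v-tree over $X'$ whose root $t'$ has the leaf $x$ as left child and $T$ as right child; this is again a full rooted binary tree with leaves in bijection with $X'$. Every $\wedge$-node of $\C_f$ or $\C_g$ is witnessed by a non-leaf node of $T$, hence by the same node viewed inside $T'$ (the relevant sets $\var(t_\ell),\var(t_r)$ are unchanged); and the new $\wedge$-node with children $x$ and $\C_f$ is witnessed by $t'$, since $\var(\text{left child}) = \{x\} \subseteq \var(t'_\ell)$ and $\var(\text{right child}) = \var(f) \subseteq X = \var(t'_r)$, and symmetrically for the other new $\wedge$-node. Hence the circuit is a d-SDNNF of size $|\C_f| + |\C_g| + O(1) = O(n)$.

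Finally I would combine the two facts and rescale: $h$ admits a d-SDNNF of size $O(n)$, whereas any d-SDNNF equivalent to $\exists x\, h \equiv f \vee g$ has size $n^{\tOmega(\log n)}$ by Theorem~\ref{thm: union}. Substituting $\Theta(n)$ for $n$ so that the d-SDNNF for $h$ has size exactly $n$ (the lower bound changes only by a constant factor in the base, absorbed into the $\tOmega(\cdot)$) yields the corollary with $X := X'$, $f := h$, and the chosen $x$. I do not expect a genuine obstacle here: the only delicate points are the routine observations that respecting a subtree implies respecting the whole v-tree (immediate from Definition~\ref{def: structure}, whose witnessing condition quantifies existentially over tree nodes) and the harmless constant-factor rescaling of $n$; all the substantive work has already been carried out in establishing Theorem~\ref{thm: union}.
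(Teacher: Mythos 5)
Your proposal is correct and follows exactly the paper's route: build $h = (x \wedge f) \vee (\neg x \wedge g)$ with a fresh selector $x$, note $\exists x\, h \equiv f \vee g$, attach $x$ as a new sibling leaf to the v-tree $T$, and invoke Theorem~\ref{thm: union}. Your verification of decomposability, determinism, and structuredness is just a more detailed spelling-out of the same argument.
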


\begin{proof} Let $f,g, T$ be as in the statement of Theorem~\ref{thm: union}. Then there are d-DNNFs $\C_f$ and $\C_g$, for $f$ and $g$ respectively, which both have size $n$ and respect $T$. Let $x$ be a fresh variable and form a new circuit $\C$ with 
\[ 
\langle \C \rangle = (x \wedge \langle \C_f \rangle ) \vee (\neg x \wedge \langle \C_g \rangle)
\]
The source is a $\vee$-node $g$ which is deterministic since any element of $\sat(g_{\ell})$ must map $x \to 1$ and any element of $\sat(g_r)$ must map $x \to 0$. Take $T$ and form a v-tree $T'$ by adding two fresh nodes $r, t$ such that $r$ has children $t$ and the root of $T$. Then $\C$ is a d-DNNF of size $O(n)$ respecting $T'$. Moreover, $\exists x f_C \equiv f \vee g$. The result follows by Theorem~\ref{thm: union}.
\end{proof}

\section{Lifting results to ACs} \label{sec: AC}
So far we have been focussed on representations of Boolean functions; now we switch gears and look at representations of real-valued polynomials: arithmetic circuits (AC). These are defined similarly to NNF except now internal nodes are labelled by $+$ and $\times$ and any real number may be a constant. 

\begin{definition} \label{def: AC}
%An arithmetic circuit (AC) is a directed acyclic graph with a unique source whose internal nodes are fan-in 2 $+$ and $\times$ gates and whose inputs are each labelled by a real number, a variable $x$ or a negated variable $\neg x$. 
An arithmetic circuit (AC) is a vertex-labelled directed acyclic graph with a unique source such that every internal node is a fan-in two $+$- or $\times$-node and whose leaves are each labelled by 0, 1, a variable $x$ or a negated variable $\neg x$.

\end{definition}

As in the case of NNFs, we associate a set of variables $\dom(\C)$ to each arithmetic circuit which contains every variable occurring in the circuit. Then we associate to $\C$ a function $f_{\C}: \{0,1\}^{\dom(\C)} \to \mathbb{R}$ as follows. On input $\underline{x}$, replace each variable $x$ occurring positively in $\C$ with $\underline{x}(x)$ and each variable occurring negatively with $1-\underline{x}(x)$. Then evaluate the circuit bottom up in the obvious way: the output is $f_{\C}(\underline{x})$. If we expand out the circuit we get a formula in $\var(\C)$ (with possible negations). We denote this by $\langle \C \rangle$ and identify this expression with the function $f_{\C}$.

Since in many cases, ACs are used in the context of probabilistic reasoning it often makes sense to restrict our attention to ACs which output non-negative polynomials; call this fragment \emph{positive} AC, denoted $\ACp$. This can be enforced syntactically by insisting that every constant is non-negative, we call this fragment \emph{monotone} AC, denoted $\AC$. This fragment includes many well-studied classes such as PSDD \cite{DBLP:conf/kr/KisaBCD14} and Sum Product Networks (SPN) \cite{DBLP:conf/iccvw/PoonD11}. Moreover, \cite{DBLP:conf/kr/ColnetM21} made the following observations connecting $\AC$ with NNF.

Let $\C$ be am $\AC$. We form an NNF circuit $\phi(\C)$ with the same underlying directed graph as $\C$ by relabelling each node as follows:
\begin{itemize}
\item Leaves: nodes labelled by a variable, a negated variable or the constant 0 are unchanged. Otherwise, change the label to the constant 1.
\item Internal node: change $+$-nodes to $\vee$-nodes and $\times$-nodes to $\wedge$-nodes.
\end{itemize}
We will use the following result.
\begin{lemma}[{\cite[Proposition 2]{DBLP:conf/kr/ColnetM21}}]
\label{lem: AC}
Let $\mathsf{C}_1$ and $\mathsf{C}_2$ be sets of $AC_m$. Then $\mathsf{C}_1 \le \mathsf{C}_2$ implies that $\phi(\mathsf{C}_1) \le \phi(\mathsf{C}_2)$. 
\end{lemma}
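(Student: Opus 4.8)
The plan is to reduce the statement to a single structural fact about the map $\phi$: for every $\AC$ circuit $\C$ and every input $\underline{x} \in \{0,1\}^{\dom(\C)}$, we have $f_{\C}(\underline{x}) \neq 0$ if and only if $f_{\phi(\C)}(\underline{x}) = 1$; equivalently, $\sat(\phi(\C)) = \{\underline{x} : f_{\C}(\underline{x}) \neq 0\}$. The point of this fact is that the Boolean function computed by $\phi(\C)$ depends only on the polynomial computed by $\C$, not on the particular circuit realising it. Throughout I would set $\dom(\phi(\C)) := \dom(\C)$, which is legitimate since $\phi$ leaves every variable and negated-variable leaf untouched, so $\var(\phi(\C)) = \var(\C)$.

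I would prove this fact by structural induction on $\C$, using that every subcircuit of an $\AC$ is again an $\AC$ and that $\phi$ is a pure relabelling of the same DAG, so $\phi(\C)(g) = \phi(\C(g))$ for every node $g$ and the induction hypothesis applies to the children. The leaf cases are immediate: a constant leaf $c$ becomes the constant $1$ exactly when $c \neq 0$ (recall $c \geq 0$ by monotonicity), while variable and negated-variable leaves are unchanged and already evaluate into $\{0,1\}$. For an internal $\times$-node, $f_{\C}(\underline{x})$ is a product of two non-negative reals, hence nonzero iff both factors are nonzero, matching the semantics of the corresponding $\wedge$-node. The crucial case is the $+$-node: there $f_{\C}(\underline{x})$ is a \emph{sum} of two non-negative reals, so it is nonzero iff at least one summand is nonzero, which is precisely the semantics of the $\vee$-node in $\phi(\C)$. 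This is the only place monotonicity is used, and it is exactly what makes the statement true — without non-negativity a $+$-node could cancel to $0$ while the corresponding $\vee$-node still outputs $1$. I do not expect a genuine obstacle beyond being careful about keeping domains aligned when passing to subcircuits; the conceptual heart is just this no-cancellation observation.

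Granting the fact, the lemma follows quickly. Suppose $\mathsf{C}_1 \le \mathsf{C}_2$ is witnessed by a polynomial $p$. Given any $D \in \phi(\mathsf{C}_2)$, write $D = \phi(\C)$ with $\C \in \mathsf{C}_2$, and note $|D| = |\C|$ since $\phi$ preserves the underlying graph. Choose $\C' \in \mathsf{C}_1$ equivalent to $\C$ as arithmetic circuits with $|\C'| \le p(|\C|)$, and set $D' := \phi(\C') \in \phi(\mathsf{C}_1)$. Then $|D'| = |\C'| \le p(|\C|) = p(|D|)$, and by the fact above $\sat(D') = \{\underline{x} : f_{\C'}(\underline{x}) \neq 0\} = \{\underline{x} : f_{\C}(\underline{x}) \neq 0\} = \sat(D)$, so $D'$ is equivalent to $D$. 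Hence $\phi(\mathsf{C}_1) \le \phi(\mathsf{C}_2)$ with the very same polynomial $p$.
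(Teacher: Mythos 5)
Your proof is correct, and it is essentially the intended argument: the paper imports this lemma from \cite[Proposition 2]{DBLP:conf/kr/ColnetM21} without reproving it, but the mechanism you identify --- that monotonicity rules out cancellation at $+$-nodes, so $\supp(\C)=\sat(\phi(\C))$ and hence $\phi(\C)$ depends only on the polynomial $f_{\C}$ --- is exactly the observation the paper itself records in the caption of its second figure, and your transfer of the witnessing polynomial $p$ is the standard way to conclude. No gaps.
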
 

Here $\le$ is defined exactly as for subsets of NNF.  Write $\supp(\C)$ to denote the support of $\C$, i.e., the set of inputs for which $f_{\C}$ is non-zero. We can also lift the definitions of decomposability, determinism and structuredness to AC, by replacing the role of $\wedge$ with $\times$, $\vee$ with $+$ and $\sat$ with $\supp$ in the definitions. Let dSD-$\AC$ denote the deterministic, structured, decomposable, monotone ACs. This is the $\AC$ analogue to d-SDNNF. We next define the $\AC$ analogue to SDD.

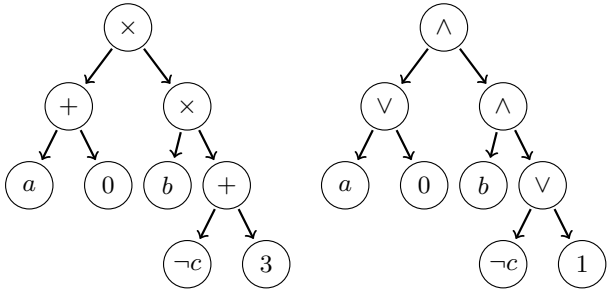
\begin{figure} 
\centering
\scalebox{1.05}{
\begin{tikzpicture}
 [wire/.style={thick, ->, shorten <= 0.3mm, shorten >= 0.5mm},
  gwire/.style={thick, ->, shorten <= 0.3mm, shorten >= 0.5mm, color=green},
   edge/.style={-stealth, shorten <= 0.3mm, shorten >= 0.5mm},
   inputgate/.style={inner sep=1pt, minimum size=4mm},
   gate/.style={draw, circle, inner sep=1pt, minimum size=6mm, font=\small},
   vertex/.style={draw, circle, fill=white, inner sep=2pt, minimum size=20mm}] 
    \begin{scope}[]
   
     \node[gate] (g1) at (-1.25,1) {$a$};
     \node[gate] (g2) at (-1.25+1,1) {$0$};
      \node[gate] (g4) at (-0.75,2) {$+$};
      \node[gate] (g5) at (0.75,2) {$\times$};
      \node[gate] (g6) at (-0,3) {$\times$};

      \node[gate] (d) at (0.5,1) {$b$};
      \node[gate] (e) at (1.25,1) {$+$};
      
	     \node[gate] (w1) at (0.75,0) {$\neg c$};
      \node[gate] (w2) at (1.75,0) {$3$};

      \draw[wire] (g4) -- (g2);
      \draw[wire] (g6) -- (g4);
      \draw[wire] (g6) -- (g5);
      \draw[wire] (g5) -- (d);
      \draw[wire] (g5) -- (e);
      \draw[wire] (g4) -- (g1);

       \draw[wire] (e) -- (w1);
         \draw[wire] (e) -- (w2);
 \end{scope}   
 
     \begin{scope}[xshift=4cm]
   
     \node[gate] (g1) at (-1.25,1) {$a$};
     \node[gate] (g2) at (-1.25+1,1) {$0$};
      \node[gate] (g4) at (-0.75,2) {$\vee$};
      \node[gate] (g5) at (0.75,2) {$\wedge$};
      \node[gate] (g6) at (-0,3) {$\wedge$};

      \node[gate] (d) at (0.5,1) {$b$};
      \node[gate] (e) at (1.25,1) {$\vee$};
      
	     \node[gate] (w1) at (0.75,0) {$\neg c$};
      \node[gate] (w2) at (1.75,0) {$1$};

      \draw[wire] (g4) -- (g2);
      \draw[wire] (g6) -- (g4);
      \draw[wire] (g6) -- (g5);
      \draw[wire] (g5) -- (d);
      \draw[wire] (g5) -- (e);
      \draw[wire] (g4) -- (g1);

       \draw[wire] (e) -- (w1);
         \draw[wire] (e) -- (w2);
 \end{scope}   
\end{tikzpicture}
}
\caption{(left) A monotone arithmetic circuit, $\C$. Note that $\langle \C \rangle = (a+0) \times (b \times (\neg c +3))$. To evaluate $f_{\C}$ for $a=1, b=1, c=0$ we compute $(1+0) \times (1 \times ((1-0) +3)) = 4$. (right) $\phi(\C)$, an NNF. Note that $\supp(\C) = \sat(\phi(\C))$.}
\end{figure}

\begin{definition} \label{def: p-decomp}
Let $f: \{0,1\}^X \to \mathbb{R}^{+}$ be a positive polynomial and  $X,Y \subseteq Z$ be disjoint sets of variables. Suppose
\[
f = \sum_{i=1}^n \alpha_i \times (p_i(X) + s_i(Y))
\]  
 where each $\alpha_i > 0$ and $\sum_{i=1}^n \alpha_i = 1$. Then $\{(p_1, s_1, \alpha_1), \dots, (p_n, s_n, \alpha_n)\}$ is an $X$ $p$-decomposition for $f$ if $\sum_{i=1}^n p_i \equiv 1$, $p_i \times p_j \equiv 0$ for all $i\neq j$ and $p_i \not \equiv 0$ for all $i$.
\end{definition}

The idea is that we take an $X$ decomposition, put a distribution on the disjuncts and then replace  $\vee$ with $+$ and $\wedge$ with $\times$. We call the $\alpha_i$ \emph{parameters}. A PSDD is defined analogously to an SDD but with $+$, $\times$ and $X$ $p$-decomposition replacing the roles of $\vee, \wedge$ and $X$ decomposition. 

\begin{definition} \label{def: PSSD}
 Let $T$ be a v-tree over variables $Z$ with root $t$.  Then $\C$ is a PSDD respecting $T$ if it is an $\AC$ with one of the following forms.
 \begin{itemize}
 \item $\C$ consists of a single node labelled by a constant, $x$ or $\neg x$, where $x \in Z$.
 \item The source of $\C$ is a $+$ node $g$ such that:
 \begin{enumerate}
 \item $\langle \C \rangle = \sum_{i=1}^n \alpha_i \times (p_i(X) + s_i(Y))$ is an $X$ p-decomposition for $f_{\C}$,
 \item $X \subseteq \var(t_{\ell})$, $Y \subseteq \var(t_r)$ and
 \item if $h \in \C$ with $\langle \C(h) \rangle = p_i(X)$ (resp. $s_i(Y)$) for some $i$ then $\C(h)$ is a PSDD that respects $t_{\ell}$ (resp. $t_r$).
\end{enumerate}  
 \end{itemize}
 A PSDD is a PSDD respecting some v-tree.
\end{definition}  

The following is now almost immediate.

\begin{corollary} \label{cor: ACsep}
 dSD-$\AC$ $<$ PSDD.
\end{corollary}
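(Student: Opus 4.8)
The plan is to combine Theorem~\ref{thm: sep} with Lemma~\ref{lem: AC}, using the fact that the $\phi$ operator sends PSDD to (something no more succinct than) SDD and dSD-$\AC$ onto d-SDNNF. First I would establish the two directions needed for a strict separation. For the easy direction, PSDD $\le$ dSD-$\AC$: every PSDD is by definition a monotone AC that is decomposable (the $p_i$, $s_i$ share no variables), deterministic (the $p_i$ form an $X$-decomposition, so their supports are disjoint, hence the $+$-nodes are deterministic in the AC sense with $\supp$ in place of $\sat$), and structured (it respects a v-tree), so PSDD $\subseteq$ dSD-$\AC$ and in particular dSD-$\AC$ $\le$ PSDD fails only if we can exhibit the reverse non-containment of succinctness, which is exactly what the separation will give. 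For the hard direction I want to show dSD-$\AC$ $\not\le$ PSDD. Suppose for contradiction that dSD-$\AC$ $\le$ PSDD. By Lemma~\ref{lem: AC} this implies $\phi(\text{dSD-}\AC) \le \phi(\text{PSDD})$.

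The next step is to identify the two circuit classes $\phi(\text{dSD-}\AC)$ and $\phi(\text{PSDD})$ with Boolean classes we already understand. Relabelling a monotone AC via $\phi$ turns $\times$ into $\wedge$, $+$ into $\vee$, keeps variable/negated-variable/$0$ leaves, and turns every positive constant leaf into $1$; one checks directly from the definitions that this preserves decomposability (the variable partition at $\times = \wedge$ nodes is untouched), determinism (the support of a monotone-AC subcircuit equals $\sat$ of its $\phi$-image, since a product/sum of non-negative things vanishes iff one factor/all summands vanish — this is the key elementary observation, already illustrated in the figure caption $\supp(\C)=\sat(\phi(\C))$), and structuredness (same underlying DAG, same variable sets, same v-tree). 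Hence $\phi(\text{dSD-}\AC) \subseteq$ d-SDNNF. For PSDD, the same bottom-up induction on Definition~\ref{def: PSSD} shows that $\phi$ of a PSDD respecting $T$ is an SDD respecting $T$: a single constant/literal leaf maps to a $1$/literal leaf; at a $+$-node with $X$ $p$-decomposition $\sum_i \alpha_i\times(p_i+s_i)$, applying $\phi$ gives $\bigvee_i (P_i \vee S_i)$ where $P_i=\phi(p_i)$, $S_i=\phi(s_i)$, and $\supp(p_i)=\sat(P_i)$, so $\{(\sat(P_i)\text{-side})\}$ — more carefully, one has to massage this into the form $\bigvee_i P_i' \wedge S_i'$ of Definition~\ref{def: decomp}; this is the one genuinely fiddly point, and I would handle it by noting that the standard PSDD-to-SDD correspondence is well known (cf.\ \cite{DBLP:conf/kr/KisaBCD14}) and that in any case $\phi(\text{PSDD})$ is contained in d-SDNNF, and moreover in SDD up to the constant-factor/quadratic blow-ups already permitted by $\le$. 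So $\phi(\text{PSDD}) \le$ SDD.

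Putting the pieces together: from dSD-$\AC$ $\le$ PSDD and Lemma~\ref{lem: AC} we get $\phi(\text{dSD-}\AC) \le \phi(\text{PSDD}) \le$ SDD. But by Theorem~\ref{thm: sep}, d-SDNNF (indeed, structured d-DNNF, which sits inside d-SDNNF) is not at least as succinct as\ldots wait — rather, Theorem~\ref{thm: sep} exhibits, for every $n$, a function with a structured d-DNNF of size $n$ but with every equivalent SDD of size $n^{\tOmega(\log n)}$; such a function is a witness that structured d-DNNF $\not\le$ SDD, i.e.\ SDD $<$ structured d-DNNF is false in the easy direction and the superpolynomial gap shows structured d-DNNF $\not\le$ SDD. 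Since these witness functions are \emph{monotone} — they come from the unambiguous DNF $\psi'$ of Theorem~\ref{thm: fixed_to_best}, which on inspection uses negated variables, so one must first argue monotonicity is available; here I would instead observe that we do not need monotone witnesses: it suffices that $\phi(\text{dSD-}\AC)$ can realise \emph{every} d-SDNNF, which follows because any d-SDNNF is already of the form $\phi(\C)$ for the monotone AC $\C$ obtained by reading its $\wedge,\vee$ as $\times,+$ and its $1$-leaves as the constant $1$, and this $\C$ is a dSD-$\AC$. Hence $\phi(\text{dSD-}\AC) =$ d-SDNNF as succinctness classes, so $\phi(\text{dSD-}\AC) \le \phi(\text{PSDD}) \le$ SDD would give d-SDNNF $\le$ SDD, contradicting Theorem~\ref{thm: sep}. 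This contradiction yields dSD-$\AC$ $\not\le$ PSDD, and together with PSDD $\subseteq$ dSD-$\AC$ (hence PSDD $\le$ dSD-$\AC$) we conclude dSD-$\AC$ $<$ PSDD.

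\medskip

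The main obstacle I expect is the bookkeeping in showing $\phi(\text{PSDD}) \le$ SDD — i.e.\ that the $\phi$-image of a PSDD, which literally has the shape $\bigvee_i(P_i\vee S_i)$ rather than $\bigvee_i P_i\wedge S_i$, still lands (up to polynomial size) in SDD, or, more robustly, that one can bypass this by only using $\phi(\text{PSDD})\subseteq$ d-SDNNF together with the fact that Theorem~\ref{thm: sep}'s lower bound is against \emph{SDD} specifically. Concretely, the cleanest route is: PSDD $\subseteq$ SDD after relabelling in the standard way, so $\phi(\text{PSDD})\le$ SDD, and d-SDNNF $=\phi(\text{dSD-}\AC)$ as above; if dSD-$\AC\le$PSDD then d-SDNNF $\le$ SDD, contradicting Theorem~\ref{thm: sep}. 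The only other point requiring a line of care is confirming PSDD $\subseteq$ dSD-$\AC$, which is immediate from Definitions~\ref{def: PSSD}, \ref{def: p-decomp} and the AC-lifted notions of decomposability/determinism/structuredness.
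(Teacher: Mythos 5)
Your proposal follows the paper's route exactly: use Lemma~\ref{lem: AC} together with the observations that $\phi(\textrm{dSD-}\AC)$ coincides with d-SDNNF and that the $\phi$-image of a PSDD can be turned into an SDD of no greater size, then invoke Theorem~\ref{thm: sep}. All of the substantive ingredients are present, including the two points that actually need care (that every d-SDNNF arises as $\phi(\C)$ for some dSD-$\AC$ $\C$, and that the parameter constants in $\phi(\textrm{PSDD})$ must be propagated away to land in SDD, which is exactly how the paper handles it).

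However, you use the symbol $\le$ with the orientation opposite to the paper's definition, consistently throughout, and this makes the written argument invalid as it stands. In the paper, $\mathsf{C}_1 \le \mathsf{C}_2$ means $\mathsf{C}_1$ is \emph{at least as succinct as} $\mathsf{C}_2$; hence $\textrm{PSDD} \subseteq \textrm{dSD-}\AC$ gives dSD-$\AC$ $\le$ PSDD (not ``PSDD $\le$ dSD-$\AC$'' as you write), and the hard direction you must prove is PSDD $\not\le$ dSD-$\AC$, not ``dSD-$\AC$ $\not\le$ PSDD'' (the latter is the negation of the \emph{easy} direction). Most importantly, your culminating step --- deriving ``d-SDNNF $\le$ SDD'' and calling it a contradiction of Theorem~\ref{thm: sep} --- fails under the paper's convention: d-SDNNF $\le$ SDD is \emph{true} (every SDD is a d-SDNNF), and Theorem~\ref{thm: sep} refutes the reverse inequality SDD $\le$ d-SDNNF. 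The correct chain is: assume PSDD $\le$ dSD-$\AC$; Lemma~\ref{lem: AC} gives $\phi(\textrm{PSDD}) \le \phi(\textrm{dSD-}\AC) = \textrm{d-SDNNF}$; combining with $\textrm{SDD} \le \phi(\textrm{PSDD})$ yields SDD $\le$ d-SDNNF, contradicting Theorem~\ref{thm: sep}. Since your plain-language glosses (``PSDD is syntactically a dSD-$\AC$'', ``PSDD cannot polynomially simulate dSD-$\AC$'') are the right ones, the fix is purely a matter of reversing every displayed inequality, but as written the symbolic argument establishes the wrong (and false) statement PSDD $<$ dSD-$\AC$. One further minor remark: your worry about the shape $\bigvee_i(P_i \vee S_i)$ versus $\bigvee_i P_i \wedge S_i$ stems from reading the $+$ between $p_i$ and $s_i$ in Definition~\ref{def: p-decomp} literally; the intended PSDD decomposition is a sum of \emph{products} $\alpha_i \times p_i \times s_i$, so $\phi$ does produce the conjunctive form and no massaging is needed beyond removing the parameter constants.
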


\begin{proof}
First, observe that $\phi(\textrm{dSD-}\AC) =$ d-SDNNF. It is not quite true that $\phi(\textrm{PSDD}) = $ SDD. However, for any $\C \in \phi(\textrm{PSDD})$, if we propagate away constants corresponding to parameters from $\phi(\C)$ we get an equivalent SDD of smaller size, $\C'$.  Therefore, $\phi(\textrm{PSDD}) \ge$ SDD. By Theorem~\ref{thm: sep}, $\phi(\textrm{dSD-}\AC) =$ d-SDNNF $<$ SDD $\le \phi(\textrm{PSDD})$ and so by Lemma~\ref{lem: AC} the result follows.
\end{proof}

Since adding two positive functions yields a positive function we also lift Theorem~\ref{thm: union}.

\begin{corollary} \label{cor: add}
For every $n \in \N$, there exists positive polynomials $f$ and $g$ which both  admit a dSD-$\AC$ of size $n$ and such that any dSD-$\ACp$ equivalent to $f + g$ has size $n^{\tOmega(\log n)}$.
\end{corollary}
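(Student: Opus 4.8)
The plan is to mirror the argument used for Corollary~\ref{cor: ACsep}, substituting Theorem~\ref{thm: union} for Theorem~\ref{thm: sep} and the additive separation for the disjunctive one. First I would take the functions $f,g$ and the v-tree $T$ furnished by Theorem~\ref{thm: union}, where $f,g$ are Boolean and admit d-DNNFs of size $n$ respecting $T$. The point is that a d-DNNF respecting a v-tree can be read as a dSD-$\AC$ respecting the same v-tree: relabel each $\wedge$ by $\times$, each $\vee$ by $+$, and keep the literal and constant leaves. Determinism of the $\vee$-nodes becomes disjointness of supports of the $+$-children, so $\supp$ of the resulting AC equals $\sat$ of the original d-DNNF. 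Hence $f$ and $g$, viewed as $0/1$-valued polynomials, admit dSD-$\AC$s of size $n$. Concretely, if $\C_f$ is the d-DNNF for $f$ then its arithmetisation $\C_f'$ is a dSD-$\AC$ with $f_{\C_f'} = f$ as functions $\{0,1\}^X \to \{0,1\} \subseteq \mathbb{R}^+$, and likewise for $g$.

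Next I would argue the lower bound on $f+g$. Since $f$ and $g$ are $0/1$-valued and share a domain, $\supp(f+g) = \sat(f) \cup \sat(g) = \sat(f \vee g)$; that is, the Boolean function underlying $f+g$ (in the sense of its support) is exactly $f \vee g$. Now suppose for contradiction that $f+g$ admitted a dSD-$\ACp$ $\C$ of size $s$. Applying the relabelling map $\phi$ from Lemma~\ref{lem: AC} (extended in the obvious way to $\ACp$, or noting that a dSD-$\ACp$ computing a $0/1$-valued polynomial can first be turned into a monotone one — this needs a small remark, see below) yields an NNF $\phi(\C)$ with $\sat(\phi(\C)) = \supp(\C) = \supp(f+g) = \sat(f\vee g)$ and of size $s$. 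One then checks that $\phi$ preserves decomposability, determinism and structuredness, exactly as recorded in Section~\ref{sec: AC}, so $\phi(\C)$ is a d-SDNNF equivalent to $f \vee g$. By Theorem~\ref{thm: union}, $s = |\phi(\C)| \ge n^{\tOmega(\log n)}$, giving the claimed bound.

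The one step that needs care — and which I expect to be the main obstacle — is the interface between $\ACp$ (positive: outputs a non-negative polynomial) and $\AC_m$ (monotone: only non-negative constants), since Lemma~\ref{lem: AC} and the $\phi$-construction are stated for $\AC_m$, not $\ACp$. For the statement about $f+g$, however, we only ever feed a \emph{specific} dSD-$\ACp$ through $\phi$, and the relabelling $\phi$ is defined purely syntactically on the underlying DAG: $+ \mapsto \vee$, $\times \mapsto \wedge$, literal leaves unchanged, every other leaf $\mapsto 1$, the constant $0$ leaf unchanged. This map makes sense for any AC, not just monotone ones, and the key fact — that $\sat(\phi(\C)) = \supp(\C)$ whenever $\C$ has no internal cancellation, which is guaranteed here by determinism/decomposability forcing every monomial to survive — holds regardless of the signs of the constants, because a product of the variable-substituted leaves is non-zero iff the corresponding literal-monomial is satisfied. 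So I would include a one-line lemma: for a decomposable, deterministic $\ACp$ $\C$, $\phi(\C)$ is a d-SDNNF of the same size with $\sat(\phi(\C)) = \supp(\C)$; then apply it with $\C$ the hypothetical small dSD-$\ACp$ for $f+g$. The rest is bookkeeping: assembling the above into the contrapositive and quoting Theorem~\ref{thm: union} for the exponent.
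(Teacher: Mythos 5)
Your proof is correct and follows the same skeleton as the paper's: arithmetise the two d-SDNNFs furnished by Theorem~\ref{thm: union} to get the size-$n$ dSD-$\AC$s, observe $\supp(f+g)=\sat(f\vee g)$, and map a hypothetical small dSD-$\ACp$ for $f+g$ back to a d-SDNNF for $f\vee g$ via the relabelling $\phi$. The one step where you diverge is the interface between $\ACp$ and $\AC$: the paper first switches the sign of every negative constant, citing \cite[Lemma 10]{DBLP:conf/kr/ColnetM21} for the fact that this yields an equivalent \emph{monotone} circuit $\C_m$, and only then applies $\phi$ (which Lemma~\ref{lem: AC} states for $\AC$), whereas you apply $\phi$ directly to the positive circuit and argue inline that $\sat(\phi(\C))=\supp(\C)$ still holds because a deterministic $+$-node has at most one non-zero summand on any input and a product vanishes iff some factor does, so no cancellation can occur anywhere in the circuit. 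Your inline argument is sound --- it is in effect a self-contained proof of what the paper outsources to the cited lemma --- and you correctly identify this $\ACp$-versus-$\AC$ interface as the only point needing care; both routes deliver a d-SDNNF for $f\vee g$ of size $|\C|$, after which Theorem~\ref{thm: union} gives the $n^{\tOmega(\log n)}$ bound.
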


\begin{proof}
Let $f$ and $g$ be as in the statement of Theorem~\ref{thm: union}. Take 
any d-SDNNF equivalent to $f$. Then if we change every $\vee$ to a $+
$ and every $\wedge$ to a $\times$ we get a dSD-$\AC$ of the same size 
which is equivalent to $f$ viewed as a positive polynomial. The same is true for $g$.  
Let $\C$ be a dSD-$\ACp$ equivalent to $f+g$, again viewed as a positive 
polynomial. If we switch the sign of every negative constant then by 
\cite[Lemma 10]{DBLP:conf/kr/ColnetM21} we get an equivalent dSD-$\AC$, 
call it $\C_m$. Then $\phi(\C_m)$ is a d-SDNNF for $f \cup g$. Therefore, by 
Theorem~\ref{thm: union}, $|\C| = |\C_m|  =n^{\tOmega(\log n)}$. 
\end{proof}

\section{Conclusion and Open Problems} \label{sec: con}

We have shown that d-SDNNF does not admit polynomial time complementation, disjunction or existential quantification and that it is more succinct than SDD. Therefore, there is a trade-off between succinctness and supported transformations in choosing one representation over the other. We have shown a quasi-polynomial separation but have not ruled out the possibility that the gap could in fact be exponential. 

A tantalising open problem, first raised over twenty years ago \cite{DBLP:journals/jair/DarwicheM02}, is whether d-DNNF is closed under complementation. We have solved a restricted form of this problem and one could attempt the general case using similar methods. Just as the size of d-SDNNF is related to the best partition communication complexity, the size of d-DNNF is related to \emph{multi-partition} communication complexity \cite{DBLP:conf/aaai/Bova16}. However, adapting the methods from this paper to this setting still appears to be a daunting task. 

\appendix

\section{Proof of Theorem~\ref{thm: union}}

We will now be concerned with \emph{unambiguous} protocols. These are non-deterministic protocols which have at most one accepting path on every input. Given a function $f: \{0,1\}^n \to \{0,1\}$ and a balanced partition of the inputs $\Pi = (X,Y)$ we write $\Un^{\Pi}_b(f):= \log_2 \Par^{\Pi}_b(f)$; this is equal to the minimum number of bits in an unambiguous protocol for establishing that $f: X \times Y \to \{0,1\}$ evaluates to $b$. The proof follows a similar structure to Theorem~\ref{thm: main}. This time we start from the following result which is shown in the proof of \cite[Theorem 2]{goos2021lower}. 

\begin{theorem} \label{thm: fixed_or}
For every $k \in \mathbb{N}$,
 there exists $n = k^{O(1)}$, Boolean function $f, 
g: \{0,1\}^n \to \{0,1\}$ and a balanced partition $\Pi$ such that the 
following properties hold.
\begin{enumerate}
\item $f,g$ have equivalent unambiguous $k$-DNFs $\psi, \phi$ respectively with $2^{\tO(k)}$ terms.
\item $\Un_1^{\Pi}(f \cup g)=\tOmega(k^2)$.
\end{enumerate}
\end{theorem}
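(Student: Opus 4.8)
The statement is used here as a black box: it is established \emph{inside} the proof of \cite[Theorem 2]{goos2021lower}, exactly as Theorem~\ref{thm: fixed_part} is read off from the proof of \cite[Theorem 1]{goos2021lower}, so the plan is to isolate that intermediate communication bound and record it in the form above. Let me sketch its content. The starting point is a separation in query/certificate complexity obtained in the refutation of the Alon--Saks--Seymour conjecture \cite{DBLP:conf/focs/BalodisBG0K21} (together with the rectangle machinery of \cite{GLMWZ16}): there is a Boolean function $h$ on $N = k^{O(1)}$ variables that decomposes as $h = h_1 \vee h_2$, where each $h_i$ has an unambiguous $\tO(k)$-DNF with $2^{\tO(k)}$ terms --- equivalently $h_i^{-1}(1)$ is exactly partitioned by $2^{\tO(k)}$ subcubes of codimension $\tO(k)$ --- while every partition of $h^{-1}(1)$ into subcubes has size $2^{\tOmega(k^2)}$. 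The point is that $h^{-1}(1)$ still has a subcube \emph{cover} of size $2^{\tO(k)}$, namely the union of the two partitions, so the blow-up is genuinely forced by unambiguity of the partition.

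The second step is a query-to-communication lifting. Compose $h$ coordinatewise with a fixed two-party gadget $\mathrm{G}$ of size $\Theta(\log N)$ (say Inner Product), put $f := h_1 \circ \mathrm{G}$, $g := h_2 \circ \mathrm{G}$ on the common variable set $Z$ with $n := |Z| = N \cdot \Theta(\log N) = k^{O(1)}$, and let $\Pi = (X,Y)$ be the canonical partition that gives Alice one half and Bob the other half of every gadget copy. For property~1: using an unambiguous $N^{O(1)}$-term DNF for each gadget condition ``$\mathrm{G}=b$'' and multiplying out over the $\tO(k)$ coordinates of a term of $h_i$, one turns the unambiguous DNF of $h_i$ into an unambiguous DNF for $f$ (resp.\ $g$) of width $\tO(k)$ with $2^{\tO(k)}$ terms, which after absorbing polylog factors into the width parameter is a $k$-DNF as demanded. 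For property~2: the lifting theorem for the \emph{unambiguous} rectangle measure --- the specialisation of \cite{GLMWZ16} used in \cite{DBLP:conf/focs/BalodisBG0K21} --- shows that $\Par_1^{\Pi}(f \cup g)$ is at least the unambiguous subcube-partition number of $h^{-1}(1)$ raised to a constant power, so $\Un_1^{\Pi}(f \cup g) = \log_2 \Par_1^{\Pi}(f \cup g) = \tOmega(k^2)$.

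The main obstacle is precisely this lifting step: one needs a simulation theorem faithful to unambiguity of the $1$-covering structure (an ambiguity-preserving counterpart of deterministic / nonnegative-junta lifting), and one must keep the gadget small enough that $n$ remains polynomial in $k$ while still tracking the width and term count so that $f$ and $g$ stay genuine $2^{\tO(k)}$-term $\tO(k)$-DNFs. The remaining ingredients are routine: the canonical gadget partition is balanced in our sense, since $|Z|/3 \le \min\{|X|,|Y|\}$ holds when each gadget copy contributes equally to $X$ and to $Y$; and the pairwise disjointness of the lifted rectangles, together with the two-part structure of $f \cup g$, is bookkeeping.
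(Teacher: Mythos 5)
Your proposal takes the same route as the paper: Theorem~\ref{thm: fixed_or} is not proved here but read off as an intermediate statement established inside the proof of \cite[Theorem 2]{goos2021lower}, exactly parallel to how Theorem~\ref{thm: fixed_part} is extracted from the proof of \cite[Theorem 1]{goos2021lower}. Your additional sketch of that source's internals (the unambiguous-DNF separation from the Alon--Saks--Seymour refutation of \cite{DBLP:conf/focs/BalodisBG0K21} lifted to communication complexity via the gadget machinery of \cite{GLMWZ16}) is consistent with the cited argument, but the paper itself supplies no more than the citation, so nothing further is required.
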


So let $f,g, \psi, \phi$ be as above. Then, by Theorem~\ref{thm: fixed_to_best} and the same argument as in Theorem~\ref{thm: main}, $\psi'$ and $\phi'$ have equivalent d-SDNNFs of size $2^{\tO(k)}$. Therefore, by Lemma~\ref{lem: rectangle}, it is enough to show that $\Un_1(\psi' \cup \phi') = \tOmega(k^2)$.

So fix an arbitrary balanced partition $\Gamma$ of the variables of $\psi' \cup \phi'$. We want to show that if there is an unambiguous protocol for $ \psi' \cup \phi'$ under partition $\Gamma$ then there is an unambiguous protocol for $f \cup g$ under partition $\Pi$ with the same communication complexity. But this follows in essentially the same way as in the proof of Theorem~\ref{thm: main} by applying Claim~\ref{claim: perm}. Therefore, $ \Un_1(\psi' \cup \phi') \ge \Un_1^{\Pi}(f \cup g) = \tOmega(k^2)$.

\section*{Acknowledgements}

The author would like to thank Stefan Mengel for his many helpful suggestions pertaining to the topics of this paper. 

\bibliographystyle{named}
\bibliography{ijcai24}

\begin{thebibliography}{}

\bibitem[\protect\citeauthoryear{Amarilli \bgroup \em et al.\egroup
  }{2020}]{DBLP:journals/mst/AmarilliCMS20}
Antoine Amarilli, Florent Capelli, Mika{\"{e}}l Monet, and Pierre Senellart.
\newblock Connecting knowledge compilation classes and width parameters.
\newblock {\em Theory Comput. Syst.}, 64(5):861--914, 2020.

\bibitem[\protect\citeauthoryear{Balodis \bgroup \em et al.\egroup
  }{2021}]{DBLP:conf/focs/BalodisBG0K21}
Kaspars Balodis, Shalev Ben{-}David, Mika G{\"{o}}{\"{o}}s, Siddhartha Jain,
  and Robin Kothari.
\newblock Unambiguous dnfs and alon-saks-seymour.
\newblock In {\em 62nd {IEEE} Annual Symposium on Foundations of Computer
  Science, {FOCS} 2021, Denver, CO, USA, February 7-10, 2022}, pages 116--124.
  {IEEE}, 2021.

\bibitem[\protect\citeauthoryear{Beame and Liew}{2015}]{DBLP:conf/uai/BeameL15}
Paul Beame and Vincent Liew.
\newblock New limits for knowledge compilation and applications to exact model
  counting.
\newblock In Marina Meila and Tom Heskes, editors, {\em Proceedings of the
  Thirty-First Conference on Uncertainty in Artificial Intelligence, {UAI}
  2015, July 12-16, 2015, Amsterdam, The Netherlands}, pages 131--140. {AUAI}
  Press, 2015.

\bibitem[\protect\citeauthoryear{Bollig and
  Farenholtz}{2021}]{DBLP:journals/mst/BolligF21}
Beate Bollig and Martin Farenholtz.
\newblock On the relation between structured d-dnnfs and sdds.
\newblock {\em Theory Comput. Syst.}, 65(2):274--295, 2021.

\bibitem[\protect\citeauthoryear{Bova and
  Szeider}{2017}]{DBLP:conf/pods/BovaS17}
Simone Bova and Stefan Szeider.
\newblock Circuit treewidth, sentential decision, and query compilation.
\newblock In Emanuel Sallinger, Jan~Van den Bussche, and Floris Geerts,
  editors, {\em Proceedings of the 36th {ACM} {SIGMOD-SIGACT-SIGAI} Symposium
  on Principles of Database Systems, {PODS} 2017, Chicago, IL, USA, May 14-19,
  2017}, pages 233--246. {ACM}, 2017.

\bibitem[\protect\citeauthoryear{Bova \bgroup \em et al.\egroup
  }{2016}]{bova2016knowledge}
Simone Bova, Florent Capelli, Stefan Mengel, and Friedrich Slivovsky.
\newblock Knowledge compilation meets communication complexity.
\newblock In Subbarao Kambhampati, editor, {\em Proceedings of the Twenty-Fifth
  International Joint Conference on Artificial Intelligence, {IJCAI} 2016, New
  York, NY, USA, 9-15 July 2016}, pages 1008--1014. {IJCAI/AAAI} Press, 2016.

\bibitem[\protect\citeauthoryear{Bova}{2016}]{DBLP:conf/aaai/Bova16}
Simone Bova.
\newblock Sdds are exponentially more succinct than obdds.
\newblock In Dale Schuurmans and Michael~P. Wellman, editors, {\em Proceedings
  of the Thirtieth {AAAI} Conference on Artificial Intelligence, February
  12-17, 2016, Phoenix, Arizona, {USA}}, pages 929--935. {AAAI} Press, 2016.

\bibitem[\protect\citeauthoryear{Bryant}{1986}]{DBLP:journals/tc/Bryant86}
Randal~E. Bryant.
\newblock Graph-based algorithms for boolean function manipulation.
\newblock {\em {IEEE} Trans. Computers}, 35(8):677--691, 1986.

\bibitem[\protect\citeauthoryear{Darwiche and
  Marquis}{2002}]{DBLP:journals/jair/DarwicheM02}
Adnan Darwiche and Pierre Marquis.
\newblock A knowledge compilation map.
\newblock {\em J. Artif. Intell. Res.}, 17:229--264, 2002.

\bibitem[\protect\citeauthoryear{Darwiche}{2001a}]{DBLP:journals/jacm/Darwiche01}
Adnan Darwiche.
\newblock Decomposable negation normal form.
\newblock {\em J. {ACM}}, 48(4):608--647, 2001.

\bibitem[\protect\citeauthoryear{Darwiche}{2001b}]{DBLP:journals/jancl/Darwiche01}
Adnan Darwiche.
\newblock On the tractable counting of theory models and its application to
  truth maintenance and belief revision.
\newblock {\em J. Appl. Non Class. Logics}, 11(1-2):11--34, 2001.

\bibitem[\protect\citeauthoryear{Darwiche}{2011}]{DBLP:conf/ijcai/Darwiche11}
Adnan Darwiche.
\newblock {SDD:} {A} new canonical representation of propositional knowledge
  bases.
\newblock In Toby Walsh, editor, {\em {IJCAI} 2011, Proceedings of the 22nd
  International Joint Conference on Artificial Intelligence, Barcelona,
  Catalonia, Spain, July 16-22, 2011}, pages 819--826. {IJCAI/AAAI}, 2011.

\bibitem[\protect\citeauthoryear{de Colnet and
  Mengel}{2021}]{DBLP:conf/kr/ColnetM21}
Alexis de~Colnet and Stefan Mengel.
\newblock A compilation of succinctness results for arithmetic circuits.
\newblock In Meghyn Bienvenu, Gerhard Lakemeyer, and Esra Erdem, editors, {\em
  Proceedings of the 18th International Conference on Principles of Knowledge
  Representation and Reasoning, {KR} 2021, Online event, November 3-12, 2021},
  pages 205--215, 2021.

\bibitem[\protect\citeauthoryear{Gogic \bgroup \em et al.\egroup
  }{1995}]{DBLP:conf/ijcai/GogicKPS95}
Goran Gogic, Henry~A. Kautz, Christos~H. Papadimitriou, and Bart Selman.
\newblock The comparative linguistics of knowledge representation.
\newblock In {\em Proceedings of the Fourteenth International Joint Conference
  on Artificial Intelligence, {IJCAI} 95, Montr{\'{e}}al Qu{\'{e}}bec, Canada,
  August 20-25 1995, 2 Volumes}, pages 862--869. Morgan Kaufmann, 1995.

\bibitem[\protect\citeauthoryear{G{\"{o}}{\"{o}}s \bgroup \em et al.\egroup
  }{2016}]{GLMWZ16}
Mika G{\"{o}}{\"{o}}s, Shachar Lovett, Raghu Meka, Thomas Watson, and David
  Zuckerman.
\newblock Rectangles are nonnegative juntas.
\newblock {\em {SIAM} J. Comput.}, 45(5):1835--1869, 2016.

\bibitem[\protect\citeauthoryear{G{\"{o}}{\"{o}}s \bgroup \em et al.\egroup
  }{2022}]{goos2021lower}
Mika G{\"{o}}{\"{o}}s, Stefan Kiefer, and Weiqiang Yuan.
\newblock Lower bounds for unambiguous automata via communication complexity.
\newblock In Mikolaj Bojanczyk, Emanuela Merelli, and David~P. Woodruff,
  editors, {\em 49th International Colloquium on Automata, Languages, and
  Programming, {ICALP} 2022, July 4-8, 2022, Paris, France}, volume 229 of {\em
  LIPIcs}, pages 126:1--126:13. Schloss Dagstuhl - Leibniz-Zentrum f{\"{u}}r
  Informatik, 2022.

\bibitem[\protect\citeauthoryear{Kisa \bgroup \em et al.\egroup
  }{2014}]{DBLP:conf/kr/KisaBCD14}
Doga Kisa, Guy~Van den Broeck, Arthur Choi, and Adnan Darwiche.
\newblock Probabilistic sentential decision diagrams.
\newblock In Chitta Baral, Giuseppe~De Giacomo, and Thomas Eiter, editors, {\em
  Principles of Knowledge Representation and Reasoning: Proceedings of the
  Fourteenth International Conference, {KR} 2014, Vienna, Austria, July 20-24,
  2014}. {AAAI} Press, 2014.

\bibitem[\protect\citeauthoryear{Knop}{2017}]{knop2017ips}
Alexander Knop.
\newblock Ips-like proof systems based on binary decision diagrams.
\newblock {\em Electron. Colloquium Comput. Complex.}, {TR17-179}, 2017.

\bibitem[\protect\citeauthoryear{Kushilevitz and
  Nisan}{1997}]{KushilevitzNisan}
Eyal Kushilevitz and Noam Nisan.
\newblock {\em Communication Complexity}.
\newblock Cambridge University Press, 1997.

\bibitem[\protect\citeauthoryear{Lam and
  Ruzzo}{1992}]{DBLP:journals/jcss/LamR92}
Tak~Wah Lam and Walter~L. Ruzzo.
\newblock Results on communication complexity classes.
\newblock {\em J. Comput. Syst. Sci.}, 44(2):324--342, 1992.

\bibitem[\protect\citeauthoryear{Pipatsrisawat and
  Darwiche}{2008}]{DBLP:conf/aaai/PipatsrisawatD08}
Knot Pipatsrisawat and Adnan Darwiche.
\newblock New compilation languages based on structured decomposability.
\newblock In Dieter Fox and Carla~P. Gomes, editors, {\em Proceedings of the
  Twenty-Third {AAAI} Conference on Artificial Intelligence, {AAAI} 2008,
  Chicago, Illinois, USA, July 13-17, 2008}, pages 517--522. {AAAI} Press,
  2008.

\bibitem[\protect\citeauthoryear{Pipatsrisawat and
  Darwiche}{2010}]{DBLP:conf/aaai/PipatsrisawatD10}
Thammanit Pipatsrisawat and Adnan Darwiche.
\newblock A lower bound on the size of decomposable negation normal form.
\newblock In Maria Fox and David Poole, editors, {\em Proceedings of the
  Twenty-Fourth {AAAI} Conference on Artificial Intelligence, {AAAI} 2010,
  Atlanta, Georgia, USA, July 11-15, 2010}, pages 345--350. {AAAI} Press, 2010.

\bibitem[\protect\citeauthoryear{Poon and
  Domingos}{2011}]{DBLP:conf/iccvw/PoonD11}
Hoifung Poon and Pedro~M. Domingos.
\newblock Sum-product networks: {A} new deep architecture.
\newblock In {\em {IEEE} International Conference on Computer Vision Workshops,
  {ICCV} 2011 Workshops, Barcelona, Spain, November 6-13, 2011}, pages
  689--690. {IEEE} Computer Society, 2011.

\bibitem[\protect\citeauthoryear{Segerlind}{2008}]{DBLP:conf/coco/Segerlind08}
Nathan Segerlind.
\newblock On the relative efficiency of resolution-like proofs and ordered
  binary decision diagram proofs.
\newblock In {\em Proceedings of the 23rd Annual {IEEE} Conference on
  Computational Complexity, {CCC} 2008, 23-26 June 2008, College Park,
  Maryland, {USA}}, pages 100--111. {IEEE} Computer Society, 2008.

\bibitem[\protect\citeauthoryear{Shen \bgroup \em et al.\egroup
  }{2016}]{DBLP:conf/nips/ShenCD16}
Yujia Shen, Arthur Choi, and Adnan Darwiche.
\newblock Tractable operations for arithmetic circuits of probabilistic models.
\newblock In Daniel~D. Lee, Masashi Sugiyama, Ulrike von Luxburg, Isabelle
  Guyon, and Roman Garnett, editors, {\em Advances in Neural Information
  Processing Systems 29: Annual Conference on Neural Information Processing
  Systems 2016, December 5-10, 2016, Barcelona, Spain}, pages 3936--3944, 2016.

\bibitem[\protect\citeauthoryear{Somenzi}{2009}]{somenzi2009cudd}
Fabio Somenzi.
\newblock Cudd: Cu decision diagram package release 2.4. 2.
\newblock {\em University of Colorado at Boulder}, 2009.

\bibitem[\protect\citeauthoryear{Van~{den Broeck} and
  Darwiche}{2015}]{DBLP:conf/aaai/BroeckD15}
Guy Van~{den Broeck} and Adnan Darwiche.
\newblock On the role of canonicity in knowledge compilation.
\newblock In Blai Bonet and Sven Koenig, editors, {\em Proceedings of the
  Twenty-Ninth {AAAI} Conference on Artificial Intelligence, January 25-30,
  2015, Austin, Texas, {USA}}, pages 1641--1648. {AAAI} Press, 2015.

\bibitem[\protect\citeauthoryear{Wegman and Carter}{1981}]{wegman1981new}
Mark~N Wegman and J~Lawrence Carter.
\newblock New hash functions and their use in authentication and set equality.
\newblock {\em Journal of computer and system sciences}, 22(3):265--279, 1981.

\end{thebibliography}

\end{document}